\documentclass[11pt]{article}
\pdfoutput=1
\usepackage{enumerate}
\usepackage{pdfsync}
\usepackage[OT1]{fontenc}

\usepackage[usenames]{color}
\usepackage{smile}
\usepackage[colorlinks,
            linkcolor=red,
            anchorcolor=blue,
            citecolor=blue
            ]{hyperref}
\usepackage{fullpage}
\usepackage{hyperref}
\usepackage[protrusion=true,expansion=true]{microtype}
\usepackage{pbox}
\usepackage{setspace}
\usepackage{tabularx}
\usepackage{float}
\usepackage{wrapfig,lipsum}
\usepackage{enumitem}
\usepackage{microtype}
\usepackage{graphicx}
\usepackage{subfigure}
\usepackage{enumerate}
\usepackage{enumitem}
\usepackage{pgfplots}
\usetikzlibrary{arrows,shapes,snakes,automata,backgrounds,petri}
\usepackage{booktabs} 

\newtheorem{condition}[theorem]{Condition}

\newcommand{\la}{\langle}
\newcommand{\ra}{\rangle}
\newcommand{\regret}{\textbf{Regret}}
\newcommand{\UHb}{\underline \Hb}
\newcommand{\KL}{\text{KL}}
\newcommand{\var}{\text{Var}}
\newcommand{\adv}{\text{adv}}
\newcommand{\stoch}{\text{stoc}}

\allowdisplaybreaks

\usepackage{enumitem}


\makeatletter
\newcommand*{\rom}[1]{\expandafter\@slowromancap\romannumeral #1@}
\makeatother
\title{\LARGE Optimal Online Generalized Linear Regression with Stochastic Noise and Its Application to Heteroscedastic Bandits\footnote{This is a revised version of the original manuscript titled `Bandit learning with general function classes: Heteroscedastic noise and variance-dependent regret bounds'. In this updated version, we have added new theoretical results on the FTRL algorithm and mainly focused on stochastic online regression. Refer to \url{https://arxiv.org/abs/2202.13603v1} for the previous version, which contains more results on heteroscedastic bandits. }}

\author
{
    Heyang Zhao\thanks{Department of Computer Science, University of California, Los Angeles, CA 90095, USA; e-amil:{\tt hyzhao@cs.ucla.edu}}
    ~~~and~~~
	Dongruo Zhou\thanks{Department of Computer Science, University of California, Los Angeles, CA 90095, USA; e-mail: {\tt drzhou@cs.ucla.edu}} 
	~~~and~~~
    Jiafan He\thanks{Department of Computer Science, University of California, Los Angeles, CA 90095, USA; e-mail: {\tt jiafanhe19@ucla.edu}}
    ~~~and~~~
	Quanquan Gu\thanks{Department of Computer Science, University of California, Los Angeles, CA 90095, USA; e-mail: {\tt qgu@cs.ucla.edu}}
}

\begin{document}
\date{}
\maketitle

\begin{abstract}
    We study the problem of online generalized linear regression in the stochastic setting, where the label is generated from a generalized linear model with possibly unbounded additive noise. We provide a sharp analysis of the classical \emph{follow-the-regularized-leader} (FTRL) algorithm to cope with the label noise. More specifically, for $\sigma$-sub-Gaussian label noise, our analysis provides a regret upper bound of $O(\sigma^2 d \log T) + o(\log T)$, where $d$ is the dimension of the input vector, $T$ is the total number of rounds. We also prove a $\Omega(\sigma^2d\log(T/d))$ lower bound for stochastic online linear regression, which indicates that our upper bound is nearly optimal. In addition, we extend our analysis to a more refined Bernstein noise condition. As an application, we study generalized linear bandits with heteroscedastic noise and propose an algorithm based on FTRL to achieve the first variance-aware regret bound. 
\end{abstract}

\section{Introduction} \label{section:Introduction}

Online learning \citep{cesa2006prediction} plays a crucial role in modern data analytics and machine learning, where a learner progressively interacts with an environment, interactively updates its prediction utilizing sequential data. As a fundamental problem in online learning, online linear regression has been well studied in the adversarial setting \citep{littlestone1991line, azoury2001relative, bartlett2015minimax}. 

In the classic adversarial setting of online linear regression with square loss, the adversary initially generates a sequence of feature vectors $\{\xb_t\}_{t\ge 1}$ in $\RR^d$ with a sequence of labels (i.e., responses) $\{y_t\}_{t \ge 1}$ in $\RR$. At each round $t \ge 1$, $\xb_t$ is revealed to the learner and the learner then makes a prediction $\hat{y}_t \in \RR$ on $\xb_t$. Afterward, the adversary reveals $y_t$,  penalizes the learner by the square loss $(y_t - \hat y_t)^2$, and enters the next round. The goal of the learner is to minimize the total loss of the first $T$ rounds, which is measured by the adversarial regret defined as follows \citep{bartlett2015minimax}: 
    $$\cR^{\adv}(T) := \sum_{t = 1}^T (\hat y_t - y_t)^2 - \inf_{\bmu \in \RR^d} \sum_{t = 1}^T (\la \xb_t, \bmu \ra - y_t)^2, $$
The adversarial regret indicates how far the current predictor is away from the best linear predictor in hindsight. 
Since the labels $\{y_t\}_{t \ge 1}$ are arbitrarily chosen by the adversary in the adversarial setting, existing results on regret upper bound usually require $\{y_t\}$ to be uniformly bounded, i.e., $y_t \in [-Y, Y]$ for all $t\ge 1$ and sometimes $\{\xb_t\}_{t\ge 1}$ is assumed to be known to the learner at the beginning of the learning process \citep[e.g.,][]{bartlett2015minimax}. For adversarial setting, it has been shown that the minimax-optimal regret is of $O(dY^2 \log T)$ \citep{azoury2001relative}, which gives a complete understanding about the statistical complexity. 

It is also interesting to consider a stochastic variant of the classic online linear regression problem where $y_t$ is generated from an underlying linear model with possibly unbounded noise $\epsilon_t$. Under this setting, the stochastic regret, which will be formally introduced in Section \ref{section:Preliminaries}, is defined more intuitively as the `gap' between the predicted label and the underlying linear function $f_{\bmu^*}(\xb_t) = \la \xb_t, \bmu^* \ra$. It is worth noting that this stochastic setting is first studied by \citet{maillard2021stochastic}, for which they studied $\sigma$-sub-Gaussian noise and attained an $\tilde{O}(\sigma^2 d^2)$ high-probability regret bound. However, whether such a bound is tight or improvable remains unknown. Thus, a natural question arises: \emph{what is the optimal regret bound for stochastic online linear regression?}


Beyond the optimality of the existing regret bound, another concern is \emph{whether the analysis for sub-Gaussian noise can be extended to other types of zero-mean noise. }Previous analyses for \emph{online-ridge-regression} and \emph{forward} algorithm provided by \citet{maillard2021stochastic} highly rely on the \emph{self-normalized concentration inequality for vector-valued martingale} \citep[Theorem 1]{abbasi2011improved}, which is for sub-Gaussian random variables. 


In this paper, we simultaneously address the aforementioned questions  for stochastic online generalized linear regression, which admits online linear regression as a special case. We provide a sharp analysis for FTRL and  a nearly matching lower bound in the stochastic setting. 

\subsection{Our Contributions}

In this paper, we make a first attempt on achieving a nearly minimax-optimal regret for \emph{stochastic} online linear regression by a fine-grained analysis of \emph{follow-the-regularized-leader} (FTRL). To show the universality of our analysis, we consider a slightly larger function class, generalized linear class, and our main result on online linear regression is given in the form of a corollary. 

Our contributions are summarized as follows: 

\begin{itemize}
    \item We propose a novel analysis on FTRL for online generalized linear regression with stochastic  noise, which provides an $\tilde{O}(\sigma^2d) + o(\log T)$ regret bound under $\sigma$-sub-Gaussian noise, where $d$ is the dimension of feature vectors, $T$ is the number of rounds. Moreover, for general noise with a variance of $\sigma^2$ (not necessarily to be sub-Gaussian), we prove a more fine-grained upper bound of order $\tilde{O}(A^2B^2 + d\sigma^2) + o(\log T)$, where $A$ is the maximum Euclidean norm of feature vectors, $B$ is the maximum Euclidean norm of $\bmu^*$.
    \item We provide a matching regret lower bound for online linear regression, indicating that our analysis of FTRL is sharp and attains the nearly optimal regret bound in the stochastic setting. To the best of our knowledge, this is the first regret lower bound for online (generalized) linear regression in the stochastic setting.
    \item As an application of our tighter result of FTRL for stochastic online regression, we consider generalized linear bandits with heteroscedastic noise \citep[e.g.,][]{zhou2021nearly, zhang2021variance, dai2022variance}. 
    We propose a novel algorithm MOR-UCB based on FTRL, which achieves an $\tilde{O}(d \sqrt{\sum_{t \in [T]}\text{Var} \ \epsilon_t} + \sqrt{dT})$ regret. This is the first variance-aware regret for generalized linear bandits. 
\end{itemize}

\paragraph{Notation.} 
We denote by $[n]$ the set $\{1,\dots, n\}$. For a vector $\xb\in \RR^d$ and matrix $\bSigma\in \RR^{d\times d}$, a positive semi-definite matrix, we denote by $\|\xb\|_2$ the vector's Euclidean norm and define $\|\xb\|_{\bSigma}=\sqrt{\xb^\top\bSigma\xb}$. For two positive sequences $\{a_n\}$ and $\{b_n\}$ with $n=1,2,\dots$, 
we write $a_n=O(b_n)$ if there exists an absolute constant $C>0$ such that $a_n\leq Cb_n$ holds for all $n\ge 1$ and write $a_n=\Omega(b_n)$ if there exists an absolute constant $C>0$ such that $a_n\geq Cb_n$ holds for all $n\ge 1$. $\tilde O(\cdot)$ is introduced to further hide the polylogarithmic factors. For a random event $\cE$, we denote its indicator by $\mathds{1}(\cE)$. 

\section{Related Work} \label{section:Related-Work}

\noindent\textbf{Online linear regression in adversarial setting.} Online linear regression has long been studied in the setting where the response variables (or labels) are bounded and chosen by an adversary \citep{10.1214/aos/1176348140, littlestone1991line, cesa1996worst, kivinen1997exponentiated, Vovk1997CompetitiveOL, bartlett2015minimax, malek2018horizon}. This problem is initiated by \citet{10.1214/aos/1176348140}, where binary labels and $\ell_1$ constrained parameters are considered. \citet{cesa1996worst} proposed a gradient-descent based algorithm, which gives a regret bound of order $O(\sqrt{T})$ when the hidden vector is $\ell_2$ constrained. \citet{Vovk1997CompetitiveOL} proposed Aggregating Algorithm, achieving $O(Y^2 d \log T)$ regret where $Y$ is the scale of labels and $d$ is the dimension of the feature vectors. \citet{bartlett2015minimax} considered the case where the feature vectors are known to the learner at the start of the game and proposed an exact minimax regret for the problem. Afterwards, \citet{malek2018horizon} generalized the results of \citet{bartlett2015minimax} to the cases where the labels and covariates can be chosen adaptively by the environment. Later on, \citet{gaillard2019uniform} proposed Forward algorithm, showing that Foward algorithm without regularization can ahcieve optimal asymptotic regret bound uniform over bounded observations. 

\noindent\textbf{Stochastic online linear regression.} Recently, \citet{maillard2021stochastic} considered the stochastic setting where the response variables are unbounded and revealed by the environment with additional random noise on the true labels. \citet{maillard2021stochastic} discussed the limitations of online learning algorithms in the adversarial setting and further advocated for the need of  complementary analyses for existing algorithms under the stochastic unbounded setting. In their paper, new analyses for online ridge regression and Forward algorithm are proposed, achieving asymptotic $O(\sigma^2 d^2 \log T)$ regret bound.\footnote{We noticed that \citet{maillard2021stochastic} also mentioned a way to acquire a tighter regret bound by applying confident sets with an $O\left(\sqrt{d \log\log T + \log(1 / \delta)}\right)$ radius \citep{tirinzoni2020asymptotically}, where the previous $O\left(\sqrt{d \log(T / \delta)}\right)$ one proposed by \citet{abbasi2011improved} leads to an $\tilde{O}(\sigma^2 d^2)$. In this way, the $T$ dependence in their bound can be further improved. However, the $d$ dependence is still quadratic. }

\noindent\textbf{Learning heteroscedastic bandits.} Heteroscedastic noise has been studied in many settings such as active learning \cite{Antos2010ActiveLI}, regression \citep{aitken1936iv, goldberg1997regression, Chaudhuri2017ActiveHR, kersting2007most}, principal component analysis \citep{hong2016towards, hong2018optimally} and Bayesian optimization \citep{assael2014heteroscedastic}. However, only a few works have considered heteroscedastic noise in bandit settings. \citet{cowan2015normal} considered a variant of multi-armed bandits where the noise at each round is a Gaussian random variable with unknown variance. \citet{kirschner2018information} is the first to formally introduce the concept of stochastic bandits with heteroscedastic noise. In their model, the variance of the noise at each round $t$ is a function of the evaluation point $x_t$, $\rho_t = \rho(x_t)$, and they further assume that the noise is $\rho_t$-sub-Gaussian. $\rho_t$ can either be observed at time $t$ or either be estimated from the observations. \citet{zhou2021nearly} and \citet{zhou2022computationally} considered linear bandits with heteroscedastic noise and generalized the heteroscedastic noise setting in \citet{kirschner2018information} in the sense that they no longer assume the noise to be $\rho_t$-sub-Gaussian, but only requires the variance of noise to be upper bounded by $\rho_t^2$ and the variances are arbitrarily decided by the environment, which is not necessarily a function of the evaluation point. In the same setting as in \citet{zhou2021nearly}, \citet{zhang2021variance} further considered a strictly harder setting where the noise has unknown variance. They proposed an algorithm that can deal with unknown variance through a computationally inefficient clip technique. Our work basically considers the noise setting proposed by \citet{zhou2021nearly} and further generalizes their setting to bandits with general function classes. We will consider extending it to the harder setting as \citet{zhang2021variance} as future work.

\section{Preliminaries} \label{section:Preliminaries}

We will introduce our problem setting and some basic concepts in this section. 

\subsection{Problem Setup}\label{nnnn}

\noindent \textbf{Stochastic online regression.}  Let $T$ be the number of rounds. At each round $t \in [T]$, the learner observes a feature vector $\xb_t \in \RR^d$ which is arbitrarily generated by the environment with $\|\xb_t\|_2 \le A$. The environment also generates the true label based on underlying function $f_{\bmu^*}$ parameterized by $\bmu^*$ with $\|\bmu^*\|_2 \le B$ and a stochastic noise $\epsilon_t$, i.e. true label $y_t := f_{\bmu^*}(\xb_t) + \epsilon_t$. After observing $\xb_t$, the learner should output a prediction $\hat y_t = f_{\hat \bmu_t}(\xb_t) \in \RR$ where $\hat \bmu_t$ stands for its estimation of $\bmu^*$. $y_t$ is subsequently revealed to the learner at the end of the $t$-th round. 


\noindent \textbf{Generalized linear function class.} In this work, we assume that $f_{\bmu^*}$ belongs to the generalized linear function class $\cG$ with known activation function $\phi: \RR \to \RR$ such that \begin{align} \cG = \left\{f_{\bmu} | \bmu \in \RR^d, f_{\bmu}(\xb) = \phi\left(\la \bmu, \xb \ra\right) \text{for } \forall \xb \in \RR^d\right\}  \label{eq:def:glm} \end{align}
To make the regression problem tractable, we require the following assumption on activation function $\phi$, which is a common assumption in literature \citep{filippi2010parametric}
\begin{assumption} \label{cond:activation}
    The activation function $\phi(\cdot)$ is an increasing differentiable function on $[-B, B]$ and there exists $\kappa, K \in \RR$ such that  $0 < \kappa \le \phi'(z) \le K$ for all $z \in [-B, B]$. 
\end{assumption}

\noindent \textbf{Assunmptions on noise.} Two types of noise are considered in this work: 
\begin{condition}[sub-Gaussian noise] \label{cond:noise:1}
    Suppose the noise sequence $\{\epsilon_t\}_{t \in [T]}$ is a sequence of i.i.d. zero-mean sub-Gaussian random variables: 
    \begin{align*} 
        \forall t \ge 1, s \in \RR,\quad \EE[\exp(s \epsilon_t)] \le \exp\left(\frac{\sigma^2s^2}{2}\right). 
    \end{align*}
\end{condition}

\begin{condition}[Bernstein's condition] \label{cond:noise:2}
    The noise sequence $\{\epsilon_t\}_{t \in [T]}$ is a sequence of independent zero-mean random variables such that \begin{align*} 
        \forall t \ge 1, \quad \PP\left(|\epsilon_t| \le R\right) = 1, \EE[\epsilon_t^2] \le \sigma^2. 
    \end{align*}
\end{condition}
\begin{remark}
    Noise with Bernstein condition naturally implies sub-Gaussianity with a variance parameter $R$. However, we want to emphasize that we are more interested in the separate dependence of $R$ and $\sigma$ defined in Condition \ref{cond:noise:2} in the complexity bound we will derive. Simplely regarding Bernstein condition noise as a sub-Gaussian noise will omit the refined dependence we want to have.
\end{remark}

\noindent \textbf{Loss and regret.} Following \citet{jun2017scalable}, 
we define the loss function $\ell_t$ at each round $t \in [T]$ as follows . \begin{align} 
    \ell_t(\bmu) := -\xb_t^\top \bmu y_t + \int_0^{\xb_t^\top \bmu} \phi(z) \text{d}z. \label{eq:def:loss}
\end{align} For linear case where $\phi$ is the identical mapping, the loss function $\ell_t$ is equivalent to the square loss between $y_t$ and $\hat y_t$, i.e. they only differ by a constant: \begin{align*}
    \ell_t(\bmu) = -\xb_t^\top \bmu y_t + \int_0^{\xb_t^\top \bmu} z \text{d}z = \frac{1}{2} \left(\xb_t^\top \bmu - y_t\right)^2 - \frac{1}{2} y_t^2. 
\end{align*} 

\begin{remark} 
    Intuitively, our definition of loss functions $\{\ell_t\}_{t = 1}^T$ is a sequence of negative log likelihood when the distribution of $y$ belongs to the exponential family \citep{nelder1972generalized}, i.e. \begin{align*} 
        \PP(y_t | \xb_t^\top \bmu = z) = h(y, z) \exp\left(\frac{y \cdot z - a(z)}{b}\right),
    \end{align*} where $a'(z) = \phi(z)$. 
\end{remark}

Aligned with the definition proposed by \cite{maillard2021stochastic}, the stochastic regret is defined as the relative cumulative loss over $f_{\bmu^*}$: \begin{align} 
    \cR^{\stoch}(T) := \sum_{t \in [T]} \ell_t(\hat \bmu_t) - \sum_{t \in [T]} \ell_t(\bmu^*). \label{eq:def:regret}
\end{align}

\begin{remark} 
    Consistent with our stochastic setting, the stochastic regret is defined in a more natural way, representing the gap of cumulative loss between the learner and the underlying function $f_{\bmu^*}$. In comparison, the adversarial regret is defined as \begin{align*} 
        \cR^{\adv}(T) := \sum_{t \in [T]} \ell_t(\hat \bmu^t) - \inf_{\bmu \in \RR^d} \sum_{t \in [T]} \ell_t(\bmu).
    \end{align*}
    These two definitions of regret are highly similar as what have discussed in previous work \citep[Theorem 3.1]{maillard2021stochastic}. The similarity has been well-studied for the adversarial bandit and stochastic bandit setting \citep{lattimore2020bandit}. Actually, the two regrets are closed to each other in stochastic online linear regression. As shown later in Section \ref{sec:4}, our bound for stochastic regret also leads to an upper bound for $\cR^\adv$ of the same order. 
    
    
\end{remark}

\subsection{Comparison between Adversarial Setting and Stochastic Setting}

In this subsection, we discuss existing results on the adversarial regret bounds for online linear regression. The minimax regret in the adversarial setting is first derived by \citet{bartlett2015minimax}, in the case where all the feature vectors are known to the learner at the beginning of the first round. The protocol of adversarial online regression is summarized in Figure \ref{figure:1}.  
\begin{figure} 
\fbox{\begin{minipage}{\linewidth - 4mm}
    Given number of rounds $T$, vector sequence $\{\xb_t\}_{t\in T}$. 

    For $t = 1, 2, \cdots, T$:
    \begin{itemize}
        \item \label{model:step:1}At the beginning of round $t$, learner outputs a prediction $\hat y_t$. 
        \item \label{model:step:2} The adversary reveals $y_t \in [-Y, Y] \subset \cR$ to the learner. 
        \item The learner observes $y_t$ and incurs loss $(y_t - \hat y_t)^2$. 
    \end{itemize}
\end{minipage}}
\caption{Protocol of adversarial online linear regression \label{figure:1}}
\end{figure}

Exploiting the adversarial nature of the sequential data, \citet{bartlett2015minimax} directly solved the following minimax regret \begin{align*} 
    \min_{\hat y_1} \max_{y_1} \cdots \min_{\hat y_T} \max_{y_T} \cR^{\adv}(T).
\end{align*}
The optimal minimax regret is $O(Y^2 d \log T)$, while the optimal strategy for the adversary is to set the label $y_t$ according to the following distribution: \begin{align} 
    y_t = \begin{cases}
        Y & w.p.\ \frac{1}{2} + \xb_t \Pb_t \left(\sum_{\tau = 1}^{t - 1} y_\tau \xb_\tau\right) / (2B)  \\
        -Y & w.p.\ \frac{1}{2} - \xb_t \Pb_t \left(\sum_{\tau = 1}^{t - 1} y_\tau \xb_\tau\right) / (2B)
    \end{cases} \label{eq:adv:noise}
\end{align} where $\Pb_t$ is defined by \begin{align*}\Pb_t^{-1} = \sum_{\tau = 1}^t \xb_\tau \xb_\tau^\top + \sum_{\tau = t + 1}^T \frac{\xb_\tau^\top \Pb_\tau \xb_\tau}{1 + \xb_\tau^\top \Pb_\tau \xb_\tau} \xb_\tau \xb_\tau^\top.\end{align*} 
It is not hard to see that such a regret bound still holds in the stochastic setting if we set $Y$ to be sufficiently large since $\inf_{\bmu \in \RR^d} \sum_{t = 1}^T (\la \xb_t, \bmu \ra - y_t)^2 \le \sum_{t = 1}^T (\la \xb_t, \bmu^* \ra - y_t)^2$. 

A natural question that arises here is whether it is significant to have a new regret bound for the stochastic setting. To answer this question, we carry out experiments for \emph{follow-the-regularized-leader} (FTRL) with both bounded stochastic label and bounded adversarial label defined in \eqref{eq:adv:noise}. We conduct this experiment under different types of noise, including clipped Gaussian noise with standard variance 1, 2, 4 and adversarial noise according to \citet{bartlett2015minimax}. The stochastic noise is clipped to ensure that all the labels are in the same bounded interval $[-Y, Y]$. In this experiment, we choose $Y=5$. We plot the regret of FTRL under different types of labels in Figure~\ref{figure:2}. We can see that the regret of FTRL in the stochastic setting is remarkably smaller than that in the adversarial setting, thus the regret bound for the adversarial setting is a crude upper bound of the  regret in the stochastic setting. Besides, our experimental results indicate that the regret of FTRL highly depends on the noise variance, even though the ranges of labels are the same.


\section{Optimal Stochastic Online Generalized Linear Regression} \label{sec:4}

We provide analyses on \emph{follow-the-regularized-leader} (FTRL) in this section. We apply a quadratic regularization in FTRL, as shown in Algorithm \ref{alg:FTRL}. At each round, FTRL aims to predict the unknown $\bmu^*$ defined in Section \ref{nnnn}. To achieve this goal, FTRL computes the minimizer of the regularized cumulative loss over all previously observed context $\xb_t$ and label $y_t$. This is slightly different from what we want to do for the adversarial setting, where the goal of FTRL is to predict the best predictor over $T$ rounds.

\begin{figure} 
\begin{minipage}{0.54\textwidth}
\begin{algorithm}[H]
    \caption{Follow The Regularized Leader (FTRL)} \label{alg:FTRL}
    \begin{algorithmic}[1]
        \STATE \textbf{Input: } $\cG, \lambda$. 
        \STATE \textbf{Initialize: } $\hat\bmu_1 \leftarrow \zero$.  
        \FOR {$t \ge 1$}
        \STATE Observe $\xb_t$. 
        \STATE Output $\hat y_t = f_{\hat \bmu_t}(\xb_t)$. 
        \STATE Observe $y_t$. 
        \STATE Update $\hat \bmu_{t + 1} \leftarrow \argmin\limits_{\bmu \in \RR^d} \lambda \|\bmu\|_2^2 + \sum\limits_{\tau = 1}^{t} \ell_\tau(\bmu), $where $\ell_\tau$ is defined in \eqref{eq:def:loss}. 
        \ENDFOR
    \end{algorithmic}
\end{algorithm}
\end{minipage}
\begin{minipage}{0.45\textwidth}
    \includegraphics*[width=\textwidth]{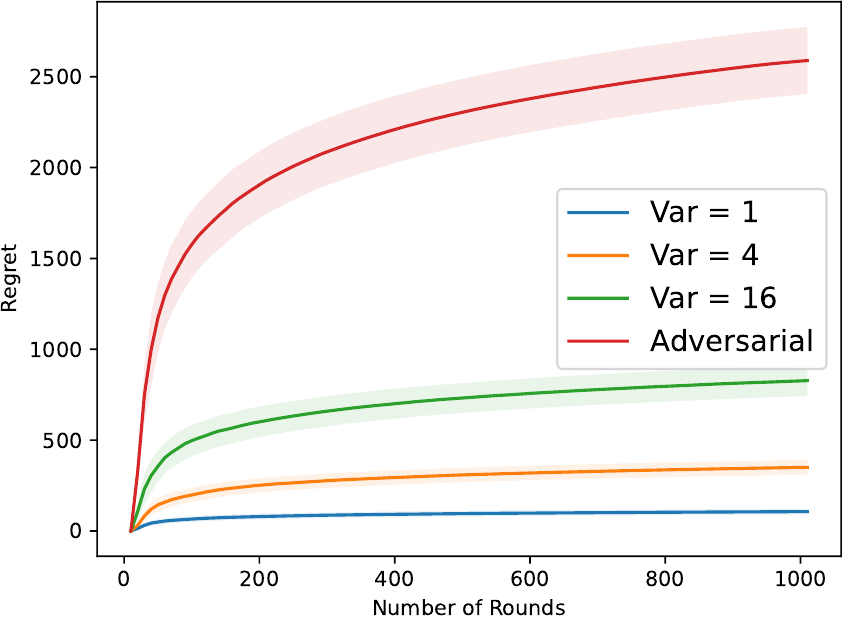}
    \caption{Regret of FTRL under different types of labels.  \label{figure:2}}
\end{minipage}
\end{figure}

\subsection{Regret Upper Bound}
We first propose the stochastic regret upper bound for FTRL under the stochastic online regression setting. 
\begin{theorem}[Regret of FTRL] \label{thm:ftrl}
    Set $\lambda = 4A^2K^2/\kappa$ and assume that the noise $\epsilon_t$ satisfies Condition~\ref{cond:noise:1} at all rounds $t\in [T]$, then with probability at least $1 - 2\delta$, the regret of Algorithm \ref{alg:FTRL} for the first $T$ rounds is bounded as follows:
    \begin{align} 
        \cR^{\stoch}(T) &\le 34 \kappa^{-1} \cdot \sigma^2 d \log \frac{4dK^2 + T \kappa^2}{4dK^2} + 8\frac{K^2A^2B^2}{\kappa} 
         + 2\left(\frac{2}{\kappa} + \frac{3\kappa}{K^2}\right) \sigma^2 \log(1 / \delta). \notag
    \end{align}
\end{theorem}

Directly applying this theorem to online linear regression, we have the following result. 

\begin{corollary}[Regret of FTRL in stochastic online linear regression]\label{coro:1}
    Suppose that $\phi$ is the identical mapping. Set $\lambda = 4A^2$ and assume that Condition \ref{cond:noise:1} holds. With probability at least $1 - 2\delta$, the regret of Algorithm \ref{alg:FTRL} for the first $T$ rounds is bounded as: \begin{align*} 
        \cR^{\stoch}(T) \le O\left(\sigma^2d\log T + A^2B^2\right). 
    \end{align*} 
\end{corollary}

\begin{remark} 
    Corollary \ref{coro:1} suggests a regret upper bound for stochastic online linear regression with square loss. Recent work by \citet{maillard2021stochastic} studied this stochastic setting and managed to get rid of the $O(A^2B^2d\log T)$ term in classic result for online linear regression in the adversarial setting. \citet{maillard2021stochastic} derived a high probability regret bound of $\tilde{O}(\sigma^2d^2)$ after omitting the $o(\log(T)^2)$ terms (Theorem 3.3, \citealt{maillard2021stochastic}). Unlike their result, our result does not suffer from the quadratic dependence on $d$. As for the $O(A^2B^2)$ term in our result, it is not hard to see that this part of loss is inevitable, since at the first round, the algorithm has no prior knowledge of $\btheta^*$. We defer the detailed analysis on the lower bound of the problem to the next section. 
\end{remark}

\subsection{Experimental Results} \label{sec:experiment1}

\begin{figure*} 
    \begin{center}
    \subfigure[$T = 1000$]{
    \includegraphics*[width=0.32\textwidth]{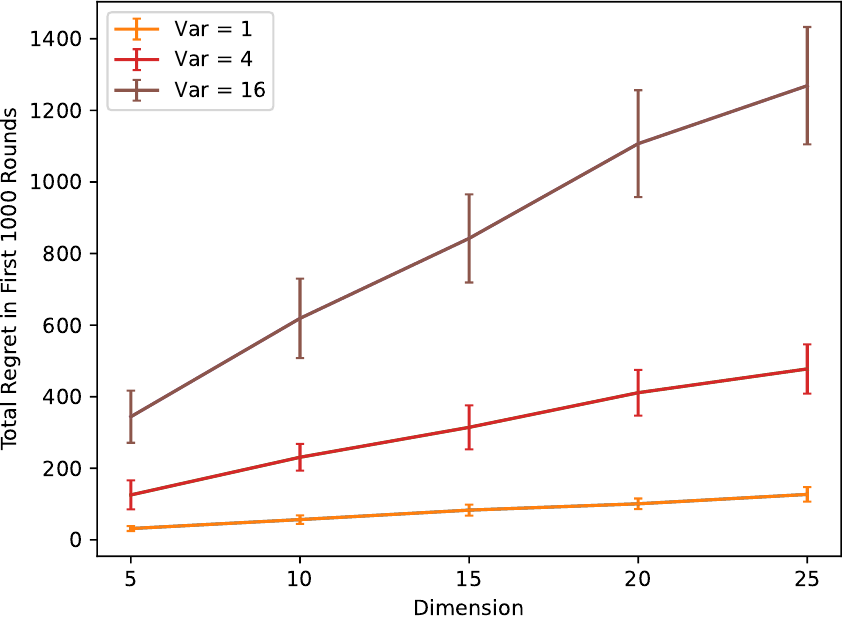}}
    \subfigure[$T = 2000$]{
    \includegraphics*[width=0.32\textwidth]{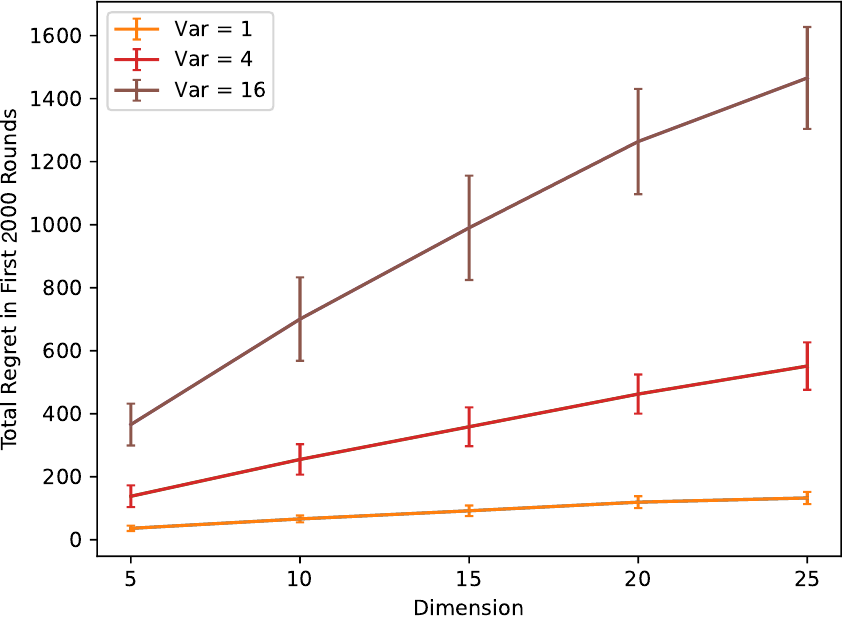}}
    \subfigure[$T = 5000$]{
    \includegraphics*[width=0.32\textwidth]{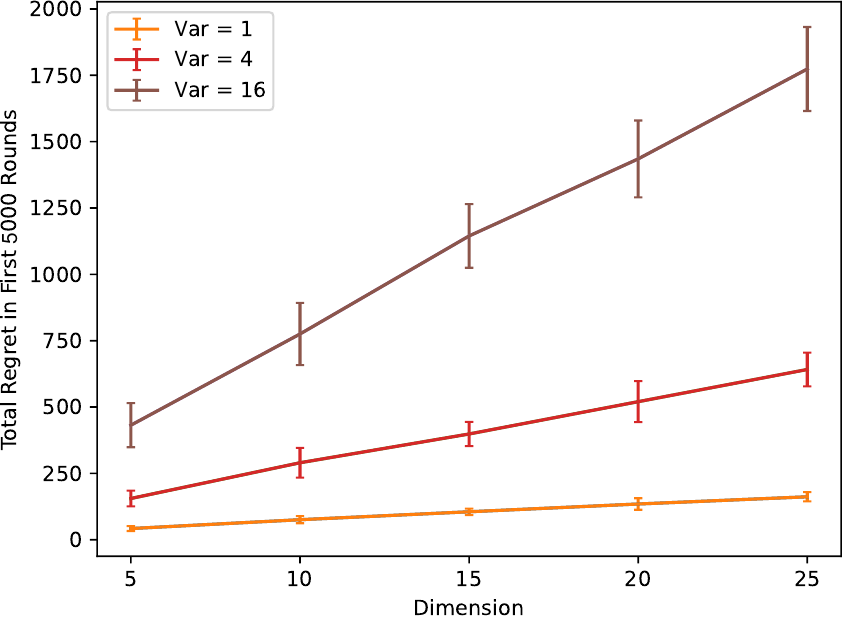}}
    \caption{Cumulative regret of FTRL for online linear regression with stochastic noise under different noise variances. The x-axis is the  dimension of the feature vector.  \label{figure:3}}
    \end{center}
\end{figure*}

In this subsection, we provide experimental evidence supporting that the high-probability regret of FTRL grows linearly on the dimension of the feature vectors. 
We plot the stochastic regret with respect to the dimension of the feature vectors in Figure \ref{figure:3}. More specifically, we compute the stochastic regret over different numbers of rounds, i.e., $\cR^\stoch(1000), \cR^\stoch(2000), \cR^\stoch(5000)$, under different Gaussian noise with $\sigma^2 = 1, 4, 16$. In each trial we sample $\bmu^*$ uniformly from $[-1 / \sqrt{d}, 1 / \sqrt{d}]^d$. At each round, we sample a feature vector uniformly from the unit sphere centered at the origin. 

In Figure \ref{figure:3}, we observe that under a fixed number of rounds, the value of stochastic regret has a linear dependence on $d$, which corroborates our theoretical analysis in Section \ref{sec:4}. 

\subsection{Extension to Bernstein's Condition} \label{subsec:bernstein}

In some real-world scenarios, however, the sub-Gaussianity condition (Condition \ref{cond:noise:1}) may be too strong for general zero-mean noise. 

In this subsection, we consider another assumption that the variance of $\epsilon_t$ is not larger than $\sigma^2$, which is formally stated in Condition \ref{cond:noise:2}. To remove the sub-Gaussianity condition, we introduce a new parameter $R$ in Condition \ref{cond:noise:2}, serving as a large uniform upper bound on the noise $\{\epsilon_t\}_{t\in[T]}$. 

\begin{theorem}[Regret of FTRL] \label{thm:ftrl2}
    Set $\lambda = 4A^2K^2/\kappa$ and assume that the noise $\epsilon_t$ satisfy Condition \ref{cond:noise:2} at all rounds $t\in [T]$, then with probability at least $1 - 2\delta$, the regret of Algorithm \ref{alg:FTRL} for the first $T$ rounds is bounded as follows:
    \begin{align} 
        \cR^{\stoch}(T) &\le 6 \kappa^{-1} \cdot \sigma^2 d \log \frac{4dK^2 + T \kappa^2}{4dK^2} + 8\frac{K^2A^2B^2}{\kappa} + 2\left(\frac{2}{\kappa} + \frac{5\kappa}{24K^2}\right) R^2 \log(1 / \delta). \notag
    \end{align}
\end{theorem}

\begin{remark} 
    When $T$ is sufficiently large, this bound becomes $O(\sigma^2 d \log T)$ in stochastic online linear regression, where $\sigma$ is a uniform bound on $\EE[\epsilon_t^2]$ for $t \in [T]$. Compared with Theorem \ref{thm:ftrl}, this theorem deals with a wider class of noise types. We defer the proof of this theorem to Section~\ref{section:proof:2}. 
\end{remark}

\subsection{Proof Outline of Theorem \ref{thm:ftrl}}
We provide the proof sketch for Theorem \ref{thm:ftrl} here. We introduce the following concepts before presenting our results and analyses. First, we define the cumulative loss as follows: \begin{align} 
    \cL_t(\bmu) = \sum_{\tau = 1}^t \ell_\tau(\bmu) + \lambda\|\bmu\|_2^2. \label{eq:def:cumloss}
\end{align}
For simplicity, we denote the Hessian matrix of $\cL_t$ by $\Hb_t$ for each $t \ge 1$, i.e., 
\begin{align} 
    \Hb_t(\bmu) := \Hb_{\cL_t}(\bmu) = 2\lambda \cdot \Ib + \sum_{\tau = 1}^t \phi'(\xb_\tau^\top \bmu) \cdot \xb_\tau \xb_\tau^\top. \label{eq:def:hessian}
\end{align}
We also construct the following sequence $\{\underline{\Hb}_t\}_{t \in [H]}$: \begin{align} 
    \underline\Hb_t := 2\lambda \cdot \Ib + \kappa \sum_{\tau = 1}^t  \xb_\tau \xb_\tau^\top. \label{eq:def:hessian1}
\end{align}
By the convexity of $\phi$, it is easy to show that $\{\underline{\Hb}_t\}_{t \in [H]}$ is a lower bound of $\{{\Hb}_t(\cdot)\}_{t \in [H]}$ since for all $t \in [T]$, $\underline \Hb_t \preceq \Hb_t(\cdot)$. 

\subsubsection{Regret Decomposition}

We first point out the key place which prevents \citet{maillard2021stochastic} obtaining a tight regret bound. 
In the previous analysis of online learning algorithms in stochastic setting \citep{maillard2021stochastic}, the regret is bounded through the summation of instantaneous regret \begin{align*} \sum_{t = 1}^T \left(\ell_t(\hat\bmu_t) - \ell_t(\bmu^*)\right) \le \sum_{t = 1}^T O\left(\|\hat\bmu_t - \bmu^*\|_{\Gb_t}^2\right) \cdot \|\xb_t\|_{\Gb_t^{-1}}^2 \end{align*} where $\Gb_t = \lambda + \sum_{\tau = 1}^{t - 1} \xb_{\tau} \xb_\tau^\top$ is the sample covariance matrix. Applying a similar confidence ellipsoid as proposed in \citet{abbasi2011improved}, $\|\hat\bmu_t - \bmu^*\|_{\Gb_t}$ is uniformly bounded by $\tilde{O}(\sigma^2d)$. By Matrix Potential Lemma \citep{abbasi2011improved}, $\sum_{t = 1}^T \|\xb_t\|_{\Gb_t^{-1}}^2$ is bounded by $O(d \log T)$. Thus, a quadratic dependence of $d$ is inevitable in their regret upper bound. 

To circumvent this issue, we prove the following lemma, which decomposes the cumulative regret into three terms. 

\begin{lemma}[Regret decomposition] \label{lemma:decomposition}
    For each $t \in [T]$, let $\cL_t$ be the cumulative loss function defined in \eqref{eq:def:cumloss} and $\Hb_t$ be the corresponding Hessian matrix as shown in \eqref{eq:def:hessian}. There exists a sequence $\{\bmu'_t\}_{t \in [T]}$ in $\RR^d$ such that the stochastic regret of Algorithm \ref{alg:FTRL} can be decomposed as follows: \begin{align} 
        \cR^{\stoch}(T) &\le \lambda B^2  + \frac{1}{2} \sum_{t = 1}^T \epsilon_t^2 \|\xb_t\|_{\Hb_{t}^{-1}(\bmu_t')}^2  + \frac{1}{2} \sum_{t = 1}^T \left(\phi(\xb_t^\top \hat\bmu_t) - \phi(\xb_t^\top \bmu^*)\right)^2\|\xb_t\|_{\Hb_{t}^{-1}(\bmu_t')}^2. \notag
    \end{align}
\end{lemma}

Intuitively speaking, $\sum_{t = 1}^T \epsilon_t^2 \|\xb_t\|_{\Hb_{t}^{-1}(\bmu_t')}^2$ represents the part of regret caused by the random noise, while $\sum_{t = 1}^T \left(\phi(\xb_t^\top \hat\bmu_t) - \phi(\xb_t^\top \bmu^*)\right)^2\|\xb_t\|_{\Hb_{t}^{-1}(\bmu_t')}^2$ represents the gap between the estimator $\hat\bmu_t$ and the hidden vector $\bmu^*$. We bound these two terms separately as follows.

\subsubsection{Bounding the estimation error}

To derive a high-probability upper bound for the term $\sum_{t = 1}^T \left(\phi(\xb_t^\top \hat\bmu_t) - \phi(\xb_t^\top \bmu^*)\right)^2\|\xb_t\|_{\Hb_{t}^{-1}(\bmu_t')}^2$ in Lemma \ref{lemma:decomposition}, we start by considering the connection between $\left(\phi(\xb_t^\top \hat\bmu_t) - \phi(\xb_t^\top \bmu^*)\right)$ and the cumulative regression regret. 

\begin{lemma}[Connection between squared estimation error and regret] \label{lemma:connect1}
    Consider an arbitrary online learner interactively trained with stochastic data for $T$ rounds as described in Section \ref{section:Preliminaries}. If Condition \ref{cond:noise:1} is true for the noise at all the rounds $t \in [T]$, then the following inequality holds with probability at least $1 - \delta$: \begin{align*} 
        \sum_{t = 1}^T \left[\xb_t^\top (\hat\bmu_t - \bmu^*)\right]^2 \le \frac{4}{\kappa} \cR^{\stoch}(T) + \frac{16}{\kappa^2} \cdot \sigma^2 \log(1 / \delta). 
    \end{align*}
\end{lemma}

\subsubsection{Bounding the weighted sum of squared noise}

According to Lemma \ref{lemma:decomposition}, it remains to bound the term $\sum_{t = 1}^T \epsilon_t^2 \|\xb_t\|_{\Hb_{t}^{-1}(\bmu_t')}^2$, which can be regarded as the weighted sum of squared sub-Gaussian random variables. 

In our analysis, we first show that $\epsilon_t^2 \|\xb_t\|_{\Hb_{t}^{-1}(\bmu_t')}^2$ is a sub-exponential random variable and apply a tail bound on the total summation. The result is presented in the following lemma. 

\begin{lemma} \label{lemma:noise-sum}
    Suppose that the sequence of noise $\{\epsilon_t\}_{t\in [T]}$ satisfies Condition \ref{cond:noise:1}. For each $t \in [T]$, let $\underline \Hb_t$ be the matrix defined in \eqref{eq:def:hessian1}. With probability at least $1 - \delta$, \begin{align} 
        \sum_{t = 1}^T \epsilon_t^2 \|\xb_t\|_{\UHb_t^{-1}}^2 &\le 34\kappa^{-1} \cdot \sigma^2 d \log \frac{d \lambda + T \kappa A^2}{d \lambda} + 24 \sigma^2 \cdot \frac{A^2}{\lambda} \log(1 / \delta). \notag 
    \end{align}
\end{lemma}

\noindent\textbf{Putting all things together.} From Lemmas \ref{lemma:decomposition}, \ref{lemma:connect1}, \ref{lemma:noise-sum}, we conclude by a union bound that, with probability at least $1 - 2\delta$, \begin{align*} 
    \cR^{\stoch}(T) \le O(\kappa^{-1}\cdot \sigma^2 d\log T) + o(\log T). 
\end{align*}

\section{Lower Bound}

In this section, we present a lower bound of the stochastic regret for stochastic online linear regression, which indicates the FTRL algorithm is already tight and optimal.

\begin{theorem}[Lower bound for stochastic online linear regression] \label{thm:lower}
    Consider the case where $\kappa = K = 1$ in Assumption \ref{cond:activation}. Then the problem degrades to stochastic online linear regression where the stochastic regret can be defined as follows: \begin{align} 
        \cR^{\stoch}(T) &:= \sum_{t=1}^T \ell_t(\hat \bmu_t) - \sum_{t=1}^T \ell_t(\bmu^*)\notag 
    \end{align}
    Suppose that the noise sequence $\{\epsilon_t\}_{t \in [T]}$ is a sequence of i.i.d. Gaussian random variables, i.e., $\epsilon_t \sim \cN(0, \sigma^2)$ for all $t \in [T]$. When $T$ is sufficiently large, for any online regression algorithm, there exists $\bmu^* \in \RR^d$ and a sequence of feature vectors $\{\xb_t\}_{t\in [T]}$ such that $\EE[\cR^{\stoch}(T)] \ge \Omega(\sigma^2d \log\left(T / d\right) + B \max_{t \in [T]} \|\xb_t\|_2)$. 
\end{theorem}

We notice that \citet{mourtada2022exact} provided a lower bound for expected
excess risk in a random-design linear prediction problem, which can be seen as an offline version of the considered problem in our work. However, in online linear regression, we have to prove the existence of a $\btheta^*$ which makes their result hold for every round $t \in[T]$. 

The proof of Theorem \ref{thm:lower} involves the application of Pinsker's inequality, which is stated in Lemma \ref{lemma:pinsker}.

\begin{proof}[Proof of Theorem \ref{thm:lower}]
For $\bmu \in \RR^d$, $\|\bmu\|_2 \le B$, we denote by $P_{\bmu}$ the measure on $\hat y_1, \cdots, \hat y_T, y_1, \cdots, y_T$ generated by the interaction between the algorithm and the environment. 

We suppose that the sequence of feature vectors are fixed and consists of unit vectors. Let $\cT_i := \{t \in [T]| \xb_t = \eb_i\}$ for each $i \in [d]$, and $t_{i, j}$ be the $j$-th element in $\cT_i$. 

Assume that $\bmu^*$ is uniformly sampled from $[-B / \sqrt{d}, B / \sqrt{d}]^d$ at the beginning of the first round. 

We show that for all $i \in [d]$, $\EE[(\hat y_{t_{i, j}} - \la \xb_{t_{i, j}}, \bmu^*\ra)^2] \ge \Omega\left(\frac{\sigma^2}{j}\right)$ for all sufficiently large $j$. Consider any pair of $\bmu^1, \bmu^2 \in [-B / \sqrt{d}, B / \sqrt{d}]^d$ such that $\bmu^1_k = \bmu^2_k$ for all $k \in [d] \backslash \{i\}$ and $\left|\bmu^1_i - \bmu^2_i\right| \in \left[\frac{\sigma}{8\cdot \sqrt{j - 1}}, \frac{\sigma}{4\cdot \sqrt{j - 1}}\right]$. By Lemma \ref{lemma:pinsker}, for any event $A$ in the filtration generated by the labels before round $t_{i, j}$, \begin{align*} 
    \left|P_{\bmu^1}(A) - P_{\bmu^2}(A)\right| \le \sqrt{\frac{1}{2} \KL\left(P_{\bmu^1}\| P_{\bmu^2}\right)} = \sqrt{\frac{1}{2} \sum_{t = 1}^{t_{i, j} - 1} \frac{\left(\xb_t^\top \bmu^1 - \xb_t^\top \bmu^2\right)^2}{2\sigma^2}} \le \frac{1}{8}. 
\end{align*}
Thus, for any event $A$, $P_{\bmu^1}(A) + P_{\bmu^2}(\overline A) \ge 7 / 8$. Let $A = \{\hat y_{t_{i, j}} \ge (\bmu_i^1 + \bmu_i^2) / 2\}$. We have \begin{align} &\EE_{\bmu^1}\left[(\hat y_{t_{i, j}} - \la \xb_{t_{i, j}}, \bmu^1\ra)^2\right] + \EE_{\bmu^2}\left[(\hat y_{t_{i, j}} - \la \xb_{t_{i, j}}, \bmu^2\ra)^2\right] \notag \\&= \EE_{\bmu^1}\left[(\bmu_1^i - \hat y_{t_{i, j}})^2\right] + \EE_{\bmu^2}\left[(\bmu_2^i - \hat y_{t_{i, j}})^2\right]\notag \\&\ge (P_{\bmu^1}(A) + P_{\bmu^2}(\overline A)) \frac{\sigma^2}{256(j - 1)} \ge \Omega(\sigma^2 / j) \label{eq:lower:pair}\end{align}
For any `segment' $\cS = \{\bmu \in [-B / \sqrt{d}, B / \sqrt{d}]^d | \bmu_i \in \left(a, a + \frac{\sigma}{4\cdot \sqrt{j - 1}}\right), \bmu_k = c_k\ for\ k \neq i\}$, we denote $\bmu_{\cS}(u) := (c_1, \cdots, a + u, c_{i + 1}, \cdots, c_d)^\top$. 

We have \begin{align*} 
    &\EE_{\bmu \in \cS}\left[(\hat y_{t_{i, j}} - \la \xb_{t_{i, j}}, \bmu\ra)^2\right] \\&=\frac{4\sqrt{j - 1}}{\sigma} \int_{a}^{a + \frac{\sigma}{4\cdot \sqrt{j - 1}}}\EE_{\bmu_{\cS}(u)}\left[(\hat y_{t_{i, j}} - (\bmu_\cS(u))_i)^2\right] \text{d}u
    \\&\ge \Omega(\sigma^2 / j), 
\end{align*} where the last inequality holds due to \eqref{eq:lower:pair}. 

From the arbitrariness of $\{c_k\}$ in $\cS$ and the uniform distribution of $\bmu^*$, we have the following conclusion for a `slice' in the cube  $[-B / \sqrt{d}, B / \sqrt{d}]^d$: 

For $\cB = \left\{\bmu \in [-B / \sqrt{d}, B / \sqrt{d}]^d | \bmu_i \in \left(a, a + \frac{\sigma}{4\cdot \sqrt{j - 1}}\right)\right\}$, we have \begin{align*} 
    \EE_{\bmu \in \cB}\left[(\hat y_{t_{i, j}} - \la \xb_{t_{i, j}}, \bmu\ra)^2\right] \ge \Omega(\sigma^2 / j). 
\end{align*}
It is then straightforward to conclude that $\EE\left[(\hat y_{t_{i, j}} - \la \xb_{t_{i, j}}, \bmu\ra)^2\right] \ge \Omega(\sigma^2 / j)$ when $j$ is sufficiently large so that $\frac{\sigma}{4\cdot \sqrt{j - 1}}$ is sufficiently small. 

If we generate $\{\xb_t\}$ such that $\cT_i = \lfloor T / d \rfloor$ for all $i \in [d]$, then we have \begin{align*} 
    \sum_{t = 1}^T \EE\left[\left(\hat y_t - \la\xb_t, \bmu^*\ra\right)^2\right] \ge \Omega(\sigma^2 \log(T / d)). 
\end{align*}
Trivially, \begin{align*} 
    \sum_{t = 1}^T \EE\left[\left(\hat y_t - \la\xb_t, \bmu^*\ra\right)^2\right] \ge \sum_{i \in [d]} \EE\left[\left(\hat y_{t_{i, 1}} - \bmu^*_i\right)^2\right]
    \ge \sum_{i \in [d]} \EE\left[(\bmu^*_i)^2\right] \ge \Omega(B). 
\end{align*}
Applying Lemma \ref{lemma:connect1}, we can further conclude that $\EE[\cR^{\stoch}(T)] \ge \Omega(\sigma^2d \log\left(T / d\right) + B)$ when $T$ is sufficiently large. 

\end{proof}

\section{Application to Heteroscedastic Bandits}\label{section:bandits}

In the last section, it is shown that FTRL achieves nearly optimal regret faced with sequential data with zero-mean noise. Particularly, in Section \ref{subsec:bernstein}, we show that FTRL is capable of dealing with noise such that $\sigma^2 = \max_{t \in [T]} \var[\epsilon_t]$. However, it only utilizes the max variance information, which is not satisfactory if we want to see the trend of the change of the variance w.r.t. rounds. Thus, it is natural to ask whether we can design an algorithm for the generalized linear bandits, whose statistical complexity depends on the variance \emph{adaptively}, says, depends on the \emph{total variance} $\Sigma_t \var[\epsilon_t]$. 
For linear bandits setting, such a goal has been achieved in \citet{zhou2021nearly, zhang2021variance}. 

\subsection{Problem Setup}

We consider a heteroscedastic variant of the classic stochastic bandit problem with generalized linear reward functions. At each round $t \in [T]$ ($T \in \NN$), the agent observes a decision set $\cD_t \subseteq \RR^d$ which is chosen by the environment. The agent then selects an action $\ab_t \in \cD_t$ and observes reward $r_t$ together with a corresponding variance upper bound $\sigma_t^2$. We assume that $r_t = f^*(\ab_t) + \epsilon_t$ where $f^* = f_{\btheta^*} \in \cG$ defined in \eqref{eq:def:glm} is the underlying real-valued reward function and $\epsilon_t$ is a random noise. We make the following assumption on $\epsilon_t$. 
\begin{assumption}[Heteroscedastic noise]\label{eq:noise}
    The noise sequence $\{\epsilon_t\}_{t \in [T]}$ is a sequence of independent zero-mean random variables such that \begin{align*} 
        \forall t \ge 1, \quad \PP\left(|\epsilon_t| \le R\right) = 1, \EE\left[\epsilon_t^2\right] \le \sigma_t^2. 
    \end{align*}
\end{assumption}
The goal of the agent is to minimize the following cumulative regret: \begin{align} 
    \regret(T) := \textstyle{\sum_{t = 1}^T} [f^*(\ab_t^*) - f^*(\ab_t)], \label{def:regret}
\end{align} where the optimal action $\ab_t^*$ at round $t \in [T]$ is defined as $
    \ab_t^* := \argmax_{\ab \in \cD_t} f^*(\ab)
$. 

\subsection{The Proposed Algorithm}

\begin{algorithm}[t!]
    \caption{\textbf{M}ulti-layer \textbf{O}nline \textbf{R}egression-UCB} \label{alg:1}
    \begin{algorithmic}[1]
        \STATE \textbf{Input: } $T, \lambda, R, \overline{\sigma} > 0$. 
        \STATE \textbf{Initialize: } Set $L \leftarrow \lceil \log_2 R / \overline{\sigma} \rceil$ \\ and $\cC_{1, l} \leftarrow \cF$, $\Psi_{1, l} \leftarrow \varnothing$ for all $l \in [L]$. 
        \FOR {$t = 1 \cdots T$}
        \STATE Observes $\cD_t$. 
        \STATE Choose $\ab_t \leftarrow \argmax\limits_{\ab \in \cD_t} \min\limits_{l \in [L]} \phi\left(\hat \btheta_{t, l}^\top \ab + \beta_{t, l} \|\ab\|_{\bSigma_{t, l}^{-1}}\right)$. \label{alg1:line:5}
        \STATE Observe stochastic reward $r_t$ and $\sigma_t^2$. \label{alg1:line:6}
        \STATE Find $l_t$ such that $2^{l_t + 1} \overline{\sigma} \ge \max(\overline{\sigma}, \sigma_t) \ge 2^{l_t} \overline{\sigma}$. \label{alg1:line:7}
        \STATE Update $\Psi_{t + 1, l_t} \leftarrow \Psi_{t, l_t} \cup \{t\}$ \\ and $\Psi_{t + 1, l} \leftarrow \Psi_{t, l}$ for all $l \in [L] \backslash \{l_t\}$. \label{alg1:line:8}
        \STATE Compute $\hat \btheta_{t + 1, l} \leftarrow \argmin\limits_{\btheta \in \RR^d} \lambda \|\btheta\|_2^2 + \sum\limits_{\tau \in \Psi_{t + 1, l}} \ell_\tau(\btheta) $ for all $l \in [L]$, where $\ell_\tau$ is defined following \eqref{eq:def:loss}: \begin{align} 
            \ell_{\tau}(\btheta) := -\ab_\tau^\top \btheta r_\tau + \int_0^{\ab_\tau^\top \btheta} \phi(z) \text{d}z. \notag
        \end{align}
        \STATE Compute $\bSigma_{t + 1, l} \leftarrow 2\lambda \Ib + \sum_{\tau \in \Psi_{t + 1, l}} \ab_\tau \ab_\tau^\top$ for all $l \in [L]$. 
        \ENDFOR
    \end{algorithmic}
\end{algorithm}

\noindent\textbf{Existing approach.} To tackle the heteroscedastic bandit problem, for the case where the $\cF$ is the linear function class (i.e., $f(a) = \la\btheta^*, a\ra$ for some $\btheta^* \in \RR^d$), a \emph{weighted linear regression} framework \citep{kirschner2018information, zhou2021nearly} has been proposed. Generally speaking, at each round $t \in [T]$, weighted linear regression constructs a confidence set $\cC_t$ based on the empirical risk minimizarion (ERM) for all previous observed actions $a_s$ and rewards $r_s$ as follows:
\begin{small}
\begin{align}
    &\btheta_t \leftarrow \argmin_{\btheta \in \RR^d}\lambda\|\btheta\|_2^2 +  \textstyle{\sum_{s \in [t]} w_s(\la \btheta, a_s\ra - r_s)^2},\ \textstyle{\cC_t\leftarrow  \big\{\btheta \in \RR^d\big|\sum_{s = 1}^t w_s (\la \btheta, a_s\ra - \la \btheta_t, a_s\ra)^2 \leq \beta_t \big\}},\notag
\end{align}
\end{small}
where $w_s$ is the weight, and $\beta_t, \lambda$ are some parameters to be specified. $w_s$ is selected in the order of the inverse of the variance $\sigma_s^2$ at round $s$ to let the variance of the rescaled reward $\sqrt{w_s}r_s$ upper bounded by 1. Therefore, after the weighting step, one can regard the heteroscedastic bandits problem as a homoscedastic bandits problem and apply existing theoretical results 
to it. To deal with the general function case, a direct attempt is to replace the $\la \btheta, a\ra$ appearing in above construction rules with $f(a)$. However, such an approach requires that $\cF$ is \emph{close} under the linear mapping, which does not hold for general function class $\cF$. 

We propose our algorithm MOR-UCB as displayed in Algorithm \ref{alg:1}. At the core of our design is the idea of partitioning the observed data into several layers and `packing' data with similar variance upper bounds into the same layer as shown in line 7-8 of Algorithm \ref{alg:1}. Specifically, for any two data belonging to the same layer, their variance will be at most one time larger than the other. Next in line 9, our algorithm implements FTRL to estimate $f^*$ according to the data points in $\Psi_{t + 1, l}$. 
Then in line 5, the agent makes use of $L$ confidence sets simultaneously to select an action based on the \emph{optimism-in-the-face-of-uncertainty} (OFU) principle over all $L$ number of levels.


\subsection{Theoretical Results}
We provide the theoretical guarantee of MOR-UCB here.
\begin{theorem}[Cumulative regret for generalized linear bandits]
    Suppose that $\|\btheta^*\|_2 \le 1$ and for all $\ab \in \bigcup_{t \in [T]} \cD_t$, $\|\ab\|_2 \le 1$. 
    Set $\lambda = 4K^2 / \kappa$ and \begin{align} 
        \beta_{t, l} &= 16\cdot 2^{l} \overline \sigma\kappa^{-1 / 2} \sqrt{d \log \left(\frac{2d\lambda + t\kappa A^2}{2d\lambda}\right) \log \left(4t^2 L/ \delta\right)} \notag \\&\quad + 4R \cdot \kappa^{-1 / 2} \log(4t^2 L/ \delta) + 2\sqrt{2 / \kappa} K \notag
    \end{align} in Algorithm \ref{alg:1}. With probability at least $1 - \delta$, the regret of Algorithm \ref{alg:1} in the first $T$ rounds satisfies that: \begin{align*} 
        \regret(T) \le \tilde{O}\left(\frac{K}{\kappa} d \sqrt{\sum_{t = 1}^T\sigma_t^2} + \left(R + K\right) K \cdot \kappa^{-1} \sqrt{dT}\right)
    \end{align*}
\end{theorem}

\begin{remark} 
In the case of heteroscedastic linear bandits \citep{zhou2021nearly} where $\kappa = K = 1$, the regret is bounded by $\tilde{O}\left(d \sqrt{\sum_{t = 1}^T \sigma_t^2} + (R + 1)\sqrt{dT}\right)$, which matches with the result in \citet{zhou2022computationally} by an $\tilde{O}((R + 1)\sqrt{dT})$ lower-order term. Applying a more fine-grained concentration bound \citep[e.g.,][]{zhou2022computationally} may further remove this term, which we leave for future work. 
\end{remark}

\section{Conclusion and Future Work}

In this paper, we study the problem of stochastic online generalized linear regression and provide a novel analysis for FTRL, attaining an $O\left(\sigma^2 d \log T\right) + o(\log T)$ upper bound. In addition, we prove the first lower bound for online linear regression in the stochastic setting, indicating that our regret bound is minimax-optimal. 

As an application, we further considered heteroscedastic generalized linear bandit problem. Applying parallel FTRL learners, we design a UCB-based algorithm MOR-UCB, which achieves a tighter instance-dependent regret bound in bandit setting. 

Although a near optimal regret for stochastic online linear regression is achieved in this paper, the regret of stochastic online regression of general loss functions is still understudied, which we leave for future work.


\appendix 

\section{Proofs from Section \ref{sec:4}}

\subsection{Proof of Theorem \ref{thm:ftrl}}

\begin{lemma}[Regret decomposition] \label{lemma:decomposition:re}
For each $t \in [T]$, let $\cL_t$ be the cumulative loss function defined in \eqref{eq:def:cumloss} and $\Hb_t$ be the corresponding Hessian matrix as shown in \eqref{eq:def:hessian}. There exists a sequence $\{\bmu'_t\}_{t \in [T]}$ in $\RR^d$ such that the stochastic regret of Algorithm \ref{alg:FTRL} can be decomposed as follows: \begin{align} 
    \cR^{\stoch}(T) \le \lambda B^2 + \frac{1}{2} \sum_{t = 1}^T \left(\phi(\xb_t^\top \hat\bmu_t) - \phi(\xb_t^\top \bmu^*)\right)^2\|\xb_t\|_{\Hb_{t}^{-1}(\bmu_t')}^2 + \frac{1}{2} \sum_{t = 1}^T \epsilon_t^2 \|\xb_t\|_{\Hb_{t}^{-1}(\bmu_t')}^2.  \notag
\end{align}
\end{lemma}

\begin{proof} 
    From the updating rule of Algorithm \ref{alg:FTRL}, 
    \begin{align} 
        \cR^{\stoch}(T) &= \sum_{t = 1}^T \ell_t(\hat\bmu_t) - \sum_{t = 1}^T \ell_t(\bmu^*) \notag
        \\&= \sum_{t = 1}^T \left[\ell_{t}(\hat\bmu_t) + \sum_{\tau = 1}^{t - 1} \ell_\tau(\hat\bmu_t) - \sum_{\tau = 1}^{t} \ell_\tau(\hat\bmu_{t + 1})\right] + \sum_{t = 1}^T \ell_t(\hat \bmu_{t + 1}) - \sum_{t = 1}^T \ell_t(\bmu^*) \notag
        \\&= \sum_{t = 1}^T \left[\cL_t(\hat\bmu_t) - \cL_t(\hat\bmu_{t + 1}) - \phi(\hat\bmu_t) + \phi(\hat\bmu_{t + 1})\right] + \sum_{t = 1}^T \ell_t(\hat \bmu_{t + 1}) - \sum_{t = 1}^T \ell_t(\bmu^*) \notag
        \\&= \sum_{t = 1}^T \left[\cL_t(\hat\bmu_t) - \cL_t(\hat\bmu_{t + 1})\right] + \cL_T(\hat \bmu_{t + 1}) - \cL_T(\bmu^*) + \lambda\|\bmu^*\|_2^2 - \lambda\|\hat\bmu_1\|_2^2 \notag
        \\&\le \lambda\|\bmu^*\|_2^2 + \sum_{t = 1}^T \left[\cL_t(\hat\bmu_t) - \cL_t(\hat\bmu_{t + 1})\right], \label{eq:regret:dec1}
    \end{align}
    where the third equality holds due to the definition of $\cL$ in \eqref{eq:def:cumloss}, 

    Applying Taylor expansion, we have \begin{align} 
        \cL_t(\hat\bmu_t) - \cL_t(\hat\bmu_{t + 1}) &= \left\la \frac{\partial \cL_t}{\partial \bmu}(\hat\bmu_t), \hat\bmu_t - \hat\bmu_{t + 1} \right\ra - \left(\hat\bmu_{t + 1} - \hat \bmu_t\right)^\top \Hb_{t}(\bmu_t')\left(\hat\bmu_{t + 1} - \hat \bmu_t\right)  \notag
        \\&= \left\la \left(\phi(\xb_t^\top \hat\bmu_t) - y_t\right) \xb_t, \hat\bmu_t - \hat\bmu_{t + 1}  \right \ra - \left(\hat\bmu_{t + 1} - \hat \bmu_t\right)^\top \Hb_{t}(\bmu_t')\left(\hat\bmu_{t + 1} - \hat \bmu_t\right) \label{eq:regret:dec2}
    \end{align} for some $\bmu_t' \in \RR^d, \|\bmu_t'\|_2 \le B$. 

    Substituting \eqref{eq:regret:dec2} into \eqref{eq:regret:dec1}, \begin{align} 
        \cR^{\stoch}(T) &\le \lambda B^2 + \sum_{t = 1}^T \left[\left\la \left(\phi(\xb_t^\top \hat\bmu_t) - y_t\right) \xb_t, \hat\bmu_t - \hat\bmu_{t + 1}  \right \ra - \left(\hat\bmu_{t + 1} - \hat \bmu_t\right)^\top \Hb_{t}(\bmu_t')\left(\hat\bmu_{t + 1} - \hat \bmu_t\right)\right] \notag
        \\&\le \lambda B^2 + \frac{1}{4} \sum_{t = 1}^T \left(\phi(\xb_t^\top \hat\bmu_t) - y_t\right)^2 \|\xb_t\|_{\Hb_{t}^{-1}(\bmu_t')}^2 \notag
        \\&\le \lambda B^2 + \frac{1}{2} \sum_{t = 1}^T \left(\phi(\xb_t^\top \hat\bmu_t) - \phi(\xb_t^\top \bmu^*)\right)^2\|\xb_t\|_{\Hb_{t}^{-1}(\bmu_t')}^2 + \frac{1}{2} \sum_{t = 1}^T \epsilon_t^2 \|\xb_t\|_{\Hb_{t}^{-1}(\bmu_t')}^2. \notag
    \end{align}
    This completes the proof.
\end{proof}

\begin{lemma}[Connection between squared estimation error and regret] \label{lemma:connect1:re}
    Consider an arbitrary online learner interactively trained with stochastic data for $T$ rounds as described in Section \ref{section:Preliminaries}. If Condition \ref{cond:noise:1} is true for the noise at all the rounds $t \in [T]$, then the following inequality holds with probability at least $1 - \delta$: \begin{align*} 
        \sum_{t = 1}^T \left[\xb_t^\top (\hat\bmu_t - \bmu^*)\right]^2 \le \frac{4}{\kappa} \cR^{\stoch}(T) + \frac{16}{\kappa^2} \cdot \sigma^2 \log(1 / \delta). 
    \end{align*}
\end{lemma}

\begin{proof} 

    We start by considering the definition of stochastic regret and making use of the property of our aforementioned loss function: 
    \begin{align}
        \cR^{\stoch}(T) &= \sum_{t = 1}^T  \ell_t(\hat\bmu_t) - \sum_{t = 1}^T \ell_t(\bmu^*) \notag
        \\&= -\sum_{t = 1}^T \xb_t^\top (\hat\bmu_t - \bmu^*)y_t + \sum_{t = 1}^T \int_{\xb_t^\top \bmu^*}^{\xb_t^\top \hat\bmu_t} \phi(z) \text{d}z \notag
        \\&\ge -\sum_{t = 1}^T \xb_t^\top (\hat\bmu_t - \bmu^*)y_t + \sum_{t = 1}^T \int_{\xb_t^\top \bmu^*}^{\xb_t^\top \hat\bmu_t} \left(\phi(\xb_t^\top \bmu^*) + \kappa (z - \xb_t^\top \bmu^*)\right) \text{d}z \notag
        \\&= -\sum_{t = 1}^T \xb_t^\top (\hat\bmu_t - \bmu^*)\epsilon_t + \sum_{t = 1}^T \frac{1}{2} \cdot \kappa \left[\xb_t^\top (\hat\bmu_t - \bmu^*)\right]^2, \label{eq:connect1}
    \end{align} where the first equality follows from the definition of regret \eqref{eq:def:regret}, the second equality follows from the definition of loss function \eqref{eq:def:loss}, the inequality holds due to Assumption \ref{cond:activation}. 

    Rearranging \eqref{eq:connect1}, it follows that \begin{align} 
        \sum_{t = 1}^T \left[\xb_t^\top (\hat\bmu_t - \bmu^*)\right]^2 \le \frac{2}{\kappa} \cR^{\stoch}(T) + \frac{2}{\kappa} \sum_{t = 1}^T \epsilon_t \cdot \left[\xb_t^\top (\hat\bmu_t - \bmu^*)\right]. \label{eq:connect2}
    \end{align}
    Since $\hat\bmu_t$ is $(\xb_{1:t}, y_{1:t - 1})$-measurable, we can apply Lemma \ref{lemma:generalized-hoeffding} to show that \begin{align} 
        \sum_{t = 1}^T \epsilon_t \cdot \left[\xb_t^\top (\hat\bmu_t - \bmu^*)\right] \le \sqrt{2 \sigma^2 \log(1 / \delta)\sum_{t = 1}^T \left[\xb_t^\top (\hat\bmu_t - \bmu^*)\right]^2 } \label{eq:connect3}
    \end{align} with probability at least $1 - \delta$. 

    Substituting \eqref{eq:connect3} into \eqref{eq:connect2}, we obtain the following high-probability bound for squared estimation error: \begin{align*} 
        \sum_{t = 1}^T \left[\xb_t^\top (\hat\bmu_t - \bmu^*)\right]^2 &\le \frac{2}{\kappa} \cR^{\stoch}(T) + \frac{2}{\kappa} \sqrt{2 \sigma^2 \log(1 / \delta)\sum_{t = 1}^T \left[\xb_t^\top (\hat\bmu_t - \bmu^*)\right]^2 }
        \\&\le \frac{4}{\kappa} \cR^{\stoch}(T) + \frac{16}{\kappa^2} \cdot \sigma^2 \log(1 / \delta), 
    \end{align*}
    where the last inequality follows from Lemma \ref{lemma:sqrt-trick}. 
\end{proof} 

\begin{lemma} \label{lemma:noise-sum:re}
    Suppose that the sequence of noise $\{\epsilon_t\}_{t\in [T]}$ satisfies Condition \ref{cond:noise:1}. For each $t \in [T]$, let $\underline \Hb_t$ be the matrix defined in \eqref{eq:def:hessian1}. With probability at least $1 - \delta$, \begin{align} 
        \sum_{t = 1}^T \epsilon_t^2 \|\xb_t\|_{\UHb_t^{-1}}^2 \le 34\kappa^{-1} \cdot \sigma^2 d \log \frac{d \lambda + T \kappa A^2}{d \lambda} + 24 \sigma^2 \cdot \frac{A^2}{\lambda} \log(1 / \delta). \notag
    \end{align}
\end{lemma}

\begin{proof} 
    We first prove that the random variable $\epsilon_t^2$ is sub-exponential conditioning on $\xb_{1:t}, y_{1:t - 1}$. 

    Let $v_t = \EE[\epsilon_t^2]$. 
    Considering the moment generating function of $\epsilon_t^2$, we have 
    for all $s \in \RR$, 
    \begin{align} 
        \EE[\exp(s (\epsilon_t^2 - v_t))] &= 1 + s \EE[\epsilon_t^2 - v_t] + \sum_{i = 2}^\infty \frac{s^i}{i!} \EE\left[\left(\epsilon_t^2 - v_t\right)^i\right] \le 1 + \sum_{i = 2}^\infty \frac{s^i}{i!} \EE\left[\epsilon_t^{2i}\right]. \notag
    \end{align}
    For sub-Gaussian noise $\epsilon_t$, we have \begin{align} 
        \EE\left[|\epsilon_t|^r\right] &= \int_{0}^\infty \PP\left(|\epsilon_t|^r \ge x\right) \text{d}x \notag
        \\&= r \int_0^\infty x^{r - 1}\PP\left(|\epsilon_t| \ge x\right) \text{d}x \notag
        \\&\le 2 r \int_0^\infty x^{r - 1} \exp\left(-\frac{x^2}{2\sigma^2}\right) \text{d}x \notag
        \\&= 2^{r / 2} \cdot r \sigma^r \int_{0}^\infty x^{\frac{r}{2} - 1} \exp(-x) \text{d}x \notag
        \\&= 2^{r / 2} \cdot r \sigma^r \cdot  \Gamma(r / 2). \notag
    \end{align}
    Hence, we have \begin{align} 
        \EE[\exp(s (\epsilon_t^2 - v_t))] &\le 1 + \sum_{i =2}^\infty 2 i \cdot \frac{s^i}{i!} 2^i \sigma^{2i} (i - 1)! \notag
        \\&\le 1 + \sum_{i = 2}^\infty 2 (2s\sigma^2)^i \notag
        \\&= 1 + \frac{8s^2\sigma^4}{1 - 2s\sigma^2}, \notag
    \end{align}
    which implies that $\epsilon_t^2 - v_t$ is $\left(\left(4\sqrt{2}\sigma^2\right)^2, 4\sigma^2\right)$-sub-exponential. 

    By the composition property of sub-exponential random variables, we have \begin{align*} \sum_{t = 1}^T \epsilon_t^2 \|\xb_t\|_{\UHb_t^{-1}}^2 \sim \text{SE}\left(32\sigma^4 \sum_{t = 1}^T \|\xb_t\|_{\UHb_t^{-1}}^4, 4\sigma^2 \max_{t \in [T]} \|\xb_t\|_{\UHb_t^{-1}}^2\right). 
    \end{align*}
    By Lemma \ref{lemma:subexponential}, the following concentration bound holds with probability at least $1 - \delta$: 
    \begin{align} 
        \sum_{t = 1}^T \epsilon_t^2 \|\xb_t\|_{\UHb_t^{-1}}^2 &- \sum_{t = 1}^T \EE\left[\epsilon_t^2\right] \|\xb_t\|_{\UHb_t^{-1}}^2 \notag
        \\&\le \max \left\{ 8 \cdot \sqrt{\log (1 / \delta)}\cdot \sigma^2 \sqrt{\sum_{t = 1}^T \|\xb_t\|_{\UHb_t^{-1}}^4},\  8\sigma^2 \max_{t \in [T]} \|\xb_t\|_{\UHb_t^{-1}}^2 \cdot \log(1 / \delta) \right\}. 
    \end{align}
    Applying Lemma \ref{lemma:potential2} and the definition of $\UHb$, we further have \begin{align} 
        \sum_{t = 1}^T \epsilon_t^2 \|\xb_t\|_{\UHb_t^{-1}}^2 &- \sum_{t = 1}^T \EE\left[\epsilon_t^2\right] \|\xb_t\|_{\UHb_t^{-1}}^2 \le 8\sigma^2  \sqrt{\frac{2A^2d}{\lambda} \kappa^{-1} \log \left(\frac{d \lambda + \kappa T A^2}{d \lambda}\right)\log(1 / \delta)} + 8\sigma^2 \frac{A^2}{\lambda} \log(1 / \delta) \notag
    \end{align} with probability at least $1 - \delta$. 

    Since $\epsilon_t$ is $\sigma$-sub-Gaussian, its variance is no larger than $\sigma^2$, which indicates that \begin{align} 
        \sum_{t = 1}^T \epsilon_t^2 \|\xb_t\|_{\UHb_t^{-1}}^2 &\le \left(\sum_{t = 1}^T \epsilon_t^2 \|\xb_t\|_{\UHb_t^{-1}}^2 - \sum_{t = 1}^T \EE\left[\epsilon_t^2\right] \|\xb_t\|_{\UHb_t^{-1}}^2\right) + \sigma^2 \sum_{t = 1}^T \|\xb_t\|_{\UHb_t^{-1}}^2 \notag
        \\&\le 8\sigma^2 \cdot \kappa^{-1 / 2}\sqrt{\frac{2A^2d}{\lambda} \log \left(\frac{d \lambda + T A^2}{d \lambda}\right)\log(1 / \delta)} + 8\sigma^2 \cdot \frac{A^2}{\lambda} \log(1 / \delta) \notag \\&\quad + \sigma^2 \cdot 2d \kappa^{-1} \log \frac{d \lambda + T \kappa A^2}{d \lambda} \notag
        \\&\le 34\kappa^{-1} \cdot \sigma^2 d \log \frac{d \lambda + T \kappa A^2}{d \lambda} + 24 \sigma^2 \cdot \frac{A^2}{\lambda} \log(1 / \delta) \notag
    \end{align}
    with probability at least $1 - \delta$. 

\end{proof}

\begin{proof}[Proof of Theorem \ref{thm:ftrl}]
    Based on Lemma \ref{lemma:connect1} and Lemma \ref{lemma:noise-sum}, the following two inequalities hold simultaneously with probability at least $1 - 2\delta$ for all $\delta \in (0, \frac{1}{2})$: \begin{align} 
        \sum_{t = 1}^T \left[\xb_t^\top (\hat\bmu_t - \bmu^*)\right]^2 &\le \frac{4}{\kappa} \cR^{\stoch}(T) + \frac{16}{\kappa^2} \cdot \sigma^2 \log(1 / \delta),   \label{eq:connect}\\
        \sum_{t = 1}^T \epsilon_t^2 \|\xb_t\|_{\UHb_t^{-1}}^2 &\le 34\kappa^{-1} \cdot \sigma^2 d \log \frac{d \lambda + T \kappa A^2}{d \lambda} + 24 \sigma^2 \cdot \frac{A^2}{\lambda} \log(1 / \delta).  \label{eq:noise-sum}
    \end{align}
In the remaining proof, we assume that \eqref{eq:connect} and \eqref{eq:noise-sum} hold. 

    From Lemma \ref{lemma:decomposition}, we have \begin{align} 
        \cR^{\stoch}(T) &\le \lambda B^2 + \frac{1}{2} \sum_{t = 1}^T \left(\phi(\xb_t^\top \hat\bmu_t) - \phi(\xb_t^\top \bmu^*)\right)^2\|\xb_t\|_{\Hb_{t}^{-1}(\bmu_t')}^2 + \frac{1}{2} \sum_{t = 1}^T \epsilon_t^2 \|\xb_t\|_{\Hb_{t}^{-1}(\bmu_t')}^2 \notag
        \\&\le \lambda B^2 + \frac{A^2K^2}{2\lambda} \sum_{t = 1}^T \left[\xb_t^\top (\hat\bmu_t - \bmu^*)\right]^2 + \frac{1}{2} \sum_{t = 1}^T \epsilon_t^2 \|\xb_t\|_{\Hb_{t}^{-1}(\bmu_t')}^2 \notag
        \\&\le \lambda B^2 + \frac{2A^2K^2}{\lambda \kappa} \cR^{\stoch}(T) + \frac{8A^2K^2}{\lambda \kappa^2} \sigma^2 \log(1 / \delta) \notag \\&\quad+ 17\kappa^{-1} \cdot \sigma^2 d \log \frac{d \lambda + T \kappa A^2}{d \lambda} +  12 \sigma^2 \cdot \frac{A^2}{\lambda} \log(1 / \delta) \label{eq:regret-lambda}
    \end{align} 
    where the first inequality is given by Lemma \ref{lemma:connect1} directly, the second inequality follows from the Lipschitz property of the activation function in Assumption \ref{cond:activation}, the third inequality holds due to \eqref{eq:connect}, \eqref{eq:noise-sum} and the fact that $\UHb_t \preceq \Hb_t^{-1}(\bmu_t')$. 

    Substituting $\lambda = 4A^2K^2 / \kappa$ into \eqref{eq:regret-lambda}, we have \begin{align*}
        \cR^{\stoch}(T) &\le \frac{1}{2} \cR^{\stoch}(T) + \left(\frac{2}{\kappa} + \frac{3\kappa}{K^2}\right) \sigma^2 \log(1 / \delta) + 17 \kappa^{-1} \cdot \sigma^2 d \log \frac{4dK^2 + T \kappa^2}{4dK^2} + 4\frac{K^2A^2B^2}{\kappa}
        \\&\le 34 \kappa^{-1} \cdot \sigma^2 d \log \frac{4dK^2 + T \kappa^2}{4dK^2} + 8\frac{K^2A^2B^2}{\kappa} + 2\left(\frac{2}{\kappa} + \frac{3\kappa}{K^2}\right) \sigma^2 \log(1 / \delta), 
    \end{align*}
    which completes the proof. 
\end{proof}

\begin{theorem}[Theorem 3.1, \citealt{maillard2021stochastic}]
In stochastic online linear regression $(\kappa = K = 1)$ in Assumption \ref{cond:activation} with Condition \ref{cond:noise:1}, we have with probability at least $1 - \delta$, \begin{align*} \cR^\adv(T) - \cR^\stoch(T) \le O(\sigma^2 d \log T) + o(\log T). \end{align*}
\end{theorem}

\subsection{Proof of Theorem \ref{thm:ftrl2}} \label{section:proof:2}

\begin{lemma}[Connection between squared estimation error and regret] \label{lemma:connect2}
    Consider an arbitrary online learner interactively trained with stochastic data for $T$ rounds as described in Section \ref{section:Preliminaries}. If Condition \ref{cond:noise:2} is true for the noise at all the rounds $t \in [T]$, then the following inequality holds with probability at least $1 - \delta$: \begin{align*} 
        \sum_{t = 1}^T \left[\xb_t^\top (\hat\bmu_t - \bmu^*)\right]^2 \le \frac{4}{\kappa} \cR^{\stoch}(T) + \frac{16}{\kappa^2} \cdot R^2 \log(1 / \delta). 
    \end{align*}
\end{lemma}

\begin{lemma} \label{lemma:noise-sum-2}
    Suppose that the sequence of noise $\{\epsilon_t\}_{t\in [T]}$ satisfies Condition \ref{cond:noise:2}. For each $t \in [T]$, let $\underline \Hb_t$ be the matrix defined in \eqref{eq:def:hessian1}. With probability at least $1 - \delta$, \begin{align} 
        \sum_{t = 1}^T \epsilon_t^2 \|\xb_t\|_{\UHb_t^{-1}}^2 \le 6\kappa^{-1} \cdot\sigma^2  d \log \frac{d \lambda + T \kappa A^2}{d \lambda} + \frac{5}{3} \cdot \frac{R^2A^2}{\lambda} \log(1 / \delta).  \notag
    \end{align}
\end{lemma}

\begin{proof} 
    We prove this lemma by bounding $\sum_{t = 1}^T \EE[\epsilon_t^2] \|\xb_t\|_{\UHb_{t}^{-1}}^2$ and $\sum_{t = 1}^T \left(\epsilon_t^2 - \EE[\epsilon_t^2]\right)\|\xb_t\|_{\UHb_{t}^{-1}}^2$ separately. 

    For the first term, we have \begin{align} 
        \sum_{t = 1}^T \EE[\epsilon_t^2] \|\xb_t\|_{\UHb_{t}^{-1}}^2 &\le \sum_{t = 1}^T \sigma^2 \|\xb_t\|_{\UHb_{t}^{-1}}^2 \le 2 \kappa^{-1} \cdot \sigma^2 d \log \frac{d \lambda + T \kappa A^2}{d \lambda}. \label{eq:noise-sum-2:mean}
    \end{align}
    For the second term $\sum_{t = 1}^T \left(\epsilon_t^2 - \EE[\epsilon_t^2]\right)\|\xb_t\|_{\UHb_{t}^{-1}}^2$, it holds that \begin{align} 
        \EE\left[\left(\epsilon_t^2 - \EE[\epsilon_t^2]\right)\|\xb_t\|_{\UHb_{t}^{-1}}^2\right] &= 0, \notag \\
        \sum_{t = 1}^T \var\left[\left(\epsilon_t^2 - \EE[\epsilon_t^2]\right)\|\xb_t\|_{\UHb_{t}^{-1}}^2\right] &\le \sum_{t = 1}^T \EE[\epsilon_t^4] \|\xb_t\|_{\UHb_{t}^{-1}}^4 \notag\\&\le R^2 \sigma^2  \frac{A^2}{\lambda} \sum_{t = 1}^T \|\xb_t\|_{\UHb_t^{-1}}^2 \notag \\&\le 2\kappa^{-1} \lambda^{-1} \cdot R^2 \sigma^2 {A^2} d \log \frac{d \lambda + T \kappa A^2}{d \lambda}, \notag \\
        \left|\left(\epsilon_t^2 - \EE[\epsilon_t^2]\right) \|\xb_t\|_{\UHb_t^{-1}}^2\right| &\le R^2 \cdot \frac{A^2}{\lambda}.  \notag
    \end{align}
    Applying Lemma \ref{lemma:freedman}, with probability at least $1 - \delta$, \begin{align}
        \sum_{t = 1}^T \left(\epsilon_t^2 - \EE[\epsilon_t^2]\right) \|\epsilon_t\|_{\UHb_t^{-1}}^2 &\le 2R \sigma A\sqrt{\kappa^{-1}\lambda^{-1} d \log \frac{d \lambda + T \kappa A^2}{d \lambda} \log(1  / \delta)} + \frac{2}{3} \cdot \frac{R^2A^2}{\lambda} \log(1 / \delta) \notag
        \\&\le 4\sigma^2 \kappa^{-1} \cdot d \log \frac{d \lambda + T \kappa A^2}{d \lambda} + \frac{5}{3} \cdot \frac{R^2A^2}{\lambda} \log(1 / \delta). \label{eq:noise-sum-2:var}
    \end{align}
    Combining \eqref{eq:noise-sum-2:mean} with \eqref{eq:noise-sum-2:var}, we can show that with probability at least $1 - \delta$, \begin{align*}
        \sum_{t = 1}^T \epsilon_t^2 \|\xb_t\|_{\UHb_t^{-1}}^2 \le 6\kappa^{-1} \cdot\sigma^2  d \log \frac{d \lambda + T \kappa A^2}{d \lambda} + \frac{5}{3} \cdot \frac{R^2A^2}{\lambda} \log(1 / \delta). 
    \end{align*}
\end{proof}

\begin{proof}[Proof of Theorem \ref{thm:ftrl2}]
    Based on Lemma \ref{lemma:connect2} and Lemma \ref{lemma:noise-sum-2}, the following two inequalities hold simultaneously with probability at least $1 - 2\delta$ for all $\delta \in (0, \frac{1}{2})$: \begin{align} 
        \sum_{t = 1}^T \left[\xb_t^\top (\hat\bmu_t - \bmu^*)\right]^2 &\le \frac{4}{\kappa} \cR^{\stoch}(T) + \frac{16}{\kappa^2} \cdot R^2 \log(1 / \delta),   \label{eq:connect-2}\\
        \sum_{t = 1}^T \epsilon_t^2 \|\xb_t\|_{\UHb_t^{-1}}^2 &\le 6\kappa^{-1} \cdot\sigma^2  d \log \frac{d \lambda + T \kappa A^2}{d \lambda} + \frac{5}{3} \cdot \frac{R^2A^2}{\lambda} \log(1 / \delta).  \label{eq:noise-sum-2}
    \end{align}
    In the remaining proof, we assume that \eqref{eq:connect-2} and \eqref{eq:noise-sum-2} hold. 

    From Lemma \ref{lemma:decomposition}, we have \begin{align} 
        \cR^{\stoch}(T) &\le \lambda B^2 + \frac{1}{2} \sum_{t = 1}^T \left(\phi(\xb_t^\top \hat\bmu_t) - \phi(\xb_t^\top \bmu^*)\right)^2\|\xb_t\|_{\Hb_{t}^{-1}(\bmu_t')}^2 + \frac{1}{2} \sum_{t = 1}^T \epsilon_t^2 \|\xb_t\|_{\Hb_{t}^{-1}(\bmu_t')}^2 \notag
        \\&\le \lambda B^2 + \frac{A^2K^2}{2\lambda} \sum_{t = 1}^T \left[\xb_t^\top (\hat\bmu_t - \bmu^*)\right]^2 + \frac{1}{2} \sum_{t = 1}^T \epsilon_t^2 \|\xb_t\|_{\Hb_{t}^{-1}(\bmu_t')}^2 \notag
        \\&\le \lambda B^2 + \frac{2A^2K^2}{\lambda \kappa} \cR^{\stoch}(T) + \frac{8A^2K^2}{\lambda \kappa^2} R^2 \log(1 / \delta) \notag \\&\quad+ 3\kappa^{-1} \cdot\sigma^2  d \log \frac{d \lambda + T \kappa A^2}{d \lambda} + \frac{5}{6} \cdot \frac{R^2A^2}{\lambda} \log(1 / \delta) \label{eq:regret-lambda-2}
    \end{align} 
    where the first inequality is given by Lemma \ref{lemma:connect1} directly, the second inequality follows from the Lipschitz property of the activation function in Assumption \ref{cond:activation}, the third inequality holds due to \eqref{eq:connect-2}, \eqref{eq:noise-sum} and the fact that $\UHb_t \preceq \Hb_t^{-1}(\bmu_t')$. 

    Substituting $\lambda = 4A^2K^2 / \kappa$ into \eqref{eq:regret-lambda}, we have \begin{align*}
        \cR^{\stoch}(T) &\le \frac{1}{2} \cR^{\stoch}(T) + \left(\frac{2}{\kappa} + \frac{5\kappa}{24K^2}\right) R^2 \log(1 / \delta) + 3 \kappa^{-1} \cdot \sigma^2 d \log \frac{4dK^2 + T \kappa^2}{4dK^2} + 4\frac{K^2A^2B^2}{\kappa}
        \\&\le 6 \kappa^{-1} \cdot \sigma^2 d \log \frac{4dK^2 + T \kappa^2}{4dK^2} + 8\frac{K^2A^2B^2}{\kappa} + 2\left(\frac{2}{\kappa} + \frac{5\kappa}{24K^2}\right) R^2 \log(1 / \delta), 
    \end{align*}
    which completes the proof. 
\end{proof}

\subsection{Proof of An Additional Result}

In this subsection, we present the following theorem, which provides a high-probability upper bound for the `gap' between the online ridge regression estimator and $\bmu^*$. 

\begin{theorem}[Confidence ellipsoid for ridge regression estimator]  \label{thm:ellipsoid}
    Set $\lambda = 4A^2 K^2 / \kappa$ and assume that the noise $\epsilon_t$ satisfy Condition \ref{cond:noise:2} at all rounds $t \in [T]$, then with probability at least $1 - \delta$, for all $t \in [T]$, it holds that \begin{align} 
        \|\hat \bmu_t - \bmu^* \|_{\UHb_{t}} &\le 8\sigma \sqrt{d \log \left(\frac{2d\lambda + t\kappa A^2}{2d\lambda}\right) \log \left(4t^2 / \delta\right)} \notag  + 4R \log(4t^2 / \delta) + \sqrt{2\lambda} B. \notag
    \end{align}
\end{theorem}

\begin{remark} 
    This theorem elucidates how to construct a confidence ellipsoid with predictions given by FTRL. Similar variance-aware confidence sets have been shown by \citet{zhou2021nearly, zhang2021variance} in linear regression, while Theorem \ref{thm:ellipsoid} is applicable to generalized linear function class. Later in section \ref{section:bandits}, we will show how to make use of this theorem in bandit setting. 
\end{remark}

\begin{proof} 
    According to Algorithm \ref{alg:FTRL}, $\bmu_{t + 1}$ is the minimizer of $\lambda \|\bmu\|_2^2 + \sum_{\tau = 1}^{t} \ell_\tau(\bmu)$. 

    Taking the derivative, we have \begin{align}
        0 &= 2\lambda \hat \bmu_{t + 1} + \sum_{\tau \in [t]} \left(-y_\tau\cdot \xb_\tau + \phi(\xb_\tau^\top \hat \bmu_{t + 1}) \cdot \xb_\tau \right) \notag \\&= 
        2\lambda \hat \bmu_{t + 1} + \sum_{\tau \in[t]} \left( -y_{\tau} + \phi(\xb_\tau^\top \bmu^*)\right) \cdot \xb_{\tau} + \sum_{\tau \in [t]} \left(\phi(\xb_\tau^\top \hat \bmu_{t + 1}) - \phi(\xb_\tau^\top \bmu^*)\right) \cdot \xb_\tau \notag 
    \end{align}
    Rearranging the equality, \begin{align}
        \sum_{\tau \in [t]} \left(\phi(\xb_\tau^\top \hat \bmu_{t + 1}) - \phi(\xb_\tau^\top \bmu^*)\right) \cdot \xb_\tau  + 2\lambda \left(\hat \bmu_{t + 1} - \bmu^*\right) = \sum_{\tau \in [t]} \epsilon_\tau \xb_\tau - 2\lambda \cdot \bmu^* \notag
    \end{align}
    For short, we let $\kappa_{\tau, t + 1} = \frac{\phi(\xb_\tau^\top \hat \bmu_{t + 1})- \phi(\xb_\tau^\top \bmu^*)}{\xb_\tau^\top \hat \bmu_{t + 1} - \xb_\tau^\top \bmu^*}$. By Assumption \ref{cond:activation}, $\kappa_{\tau, t + 1} \in [\kappa, K]$. 

    Thus, we have \begin{align} 
        \left\|\left(2\lambda \cdot \Ib + \sum_{\tau \in [t]} \kappa_{\tau, t + 1} \xb_\tau \xb_\tau^\top\right) \cdot (\hat\bmu_{t + 1} - \bmu^*)\right\|_{\UHb_{t}^{-1}} &= \left\|\sum_{\tau \in [t]} \epsilon_\tau \xb_\tau - 2\lambda \cdot \bmu^*\right\|_{\UHb_{t}^{-1}} \notag
        \\&\le \left\|\sum_{\tau \in [t]} \epsilon_\tau \xb_\tau\right\|_{\UHb_{t}^{-1}} + \sqrt{2\lambda} \cdot \left\|\bmu^*\right\|_2. \notag
    \end{align}
    Since $2\lambda \cdot \Ib + \sum_{\tau \in [t]} \kappa_{\tau, t + 1} \xb_\tau \xb_\tau^\top \succeq \UHb_t$, with probability at least $1 - \delta$, for all $t \ge 1$, it holds that
    \begin{align} 
        \left\|\hat\bmu_{t + 1} - \bmu^* \right\|_{\UHb_t^{-1}} &\le \left\|\sum_{\tau \in [t]}\epsilon_\tau \xb_\tau \right\|_{\UHb_t^{-1}} + \sqrt{2\lambda} \cdot \left\|\bmu^*\right\|_2 \notag
        \\&\le 8\sigma \cdot \kappa^{-1 / 2} \sqrt{d \log \left(\frac{2d\lambda + t\kappa A^2}{2d\lambda}\right) \log \left(4t^2 / \delta\right)} + 4\cdot \kappa^{-1 / 2}R \log(4t^2 / \delta) + \sqrt{2\lambda} B,  \notag
    \end{align}
    where the second inequality holds due to Theorem 4.1 in \citet{zhou2021nearly}. 
\end{proof}

\section{Proofs from Section \ref{section:bandits}}

\begin{lemma}[Variance-aware confidence ellipsoid for generalized linear bandits] \label{lemma:conf}
    Suppose that $\|\btheta^*\|_2 \le 1$ and for all $\ab \in \bigcup_{t \in [T]} \cD_t$, $\|\ab\|_2 \le 1$. 
    Set $\lambda = 4K^2 / \kappa$ in Algorithm \ref{alg:1}. With probability at least $1 - \delta$, it holds that \begin{align} 
        \|\hat\btheta_{t, \ell} - \btheta^*\|_{\bSigma_{t, l}} &\le 16\cdot 2^{l} \overline \sigma \cdot \kappa^{-1 / 2} \sqrt{d \log \left(\frac{2d\lambda + t\kappa A^2}{2d\lambda}\right) \log \left(4t^2 L/ \delta\right)} \\& + 4R \cdot \kappa^{-1 / 2} \log(4t^2 L/ \delta) + 2\sqrt{2 / \kappa}\cdot K \notag
    \end{align} for all $t \in [T]$. 
\end{lemma}

\begin{proof} 
This lemma can be proved by a direct application of Theorem \ref{thm:ellipsoid} and a union bound over $L$ layers. 
\end{proof}

\begin{theorem}[Cumulative regret for generalized linear bandits]
    Suppose that $\|\btheta^*\|_2 \le 1$ and for all $\ab \in \bigcup_{t \in [T]} \cD_t$, $\|\ab\|_2 \le 1$. 
    Set $\lambda = 4K^2 / \kappa$ and \begin{align} 
        \beta_{t, l} = 16\cdot 2^{l} \overline \sigma \cdot \kappa^{-1 / 2} \sqrt{d \log \left(\frac{2d\lambda + t\kappa A^2}{2d\lambda}\right) \log \left(4t^2 L/ \delta\right)} + 4R \cdot \kappa^{-1 / 2} \log(4t^2 L/ \delta) + 2\sqrt{2 / \kappa}\cdot K \label{eq:beta}
    \end{align} in Algorithm \ref{alg:1}. With probability at least $1 - \delta$, the regret of Algorithm \ref{alg:1} at the first $T$ rounds satisfies that: \begin{align*} 
        \regret(T) \le \tilde{O}\left(\frac{K}{\kappa} \cdot d \sqrt{\sum_{t = 1}^T\sigma_t^2} + \left(R + K\right) K \cdot \kappa^{-1} \sqrt{dT}\right)
    \end{align*}
\end{theorem}

\begin{proof} 
    Based on the definition of regret \eqref{def:regret}, 
    \begin{align} 
        \regret(T) &= \sum_{t = 1}^T \phi\left(\left(\btheta^*\right)^\top \ab_t^*\right) - \phi\left(\left(\btheta^*\right)^\top \ab_t\right) \notag
        \\&\le \sum_{t = 1}^T \phi\left({\hat \btheta_{t, l_t}}^\top \ab_t + \beta_{t, l_t} \cdot \|\ab_t\|_{\bSigma_{t, l_t}^{-1}}\right) - \phi\left(\left(\btheta^*\right)^\top \ab_t\right) \notag
        \\&\le \sum_{t = 1}^T 2K \cdot \beta_{t, l_t} \cdot \|\ab_t\|_{\bSigma_{t, l_t}^{-1}} \notag
        \\&\le 2K \sum_{l \in [L]} \beta_{T, l} \sum_{t \in \Psi_{T + 1, l}} \|\ab_t\|_{\bSigma_{t, l}^{-1}} \notag
        \\&\le 2K \sum_{l \in [L]} \beta_{T, l} \sqrt{\left|\Psi_{T+1, l}\right|}\sqrt{\sum_{t \in \Psi_{T + 1, l}} \min\left\{1 / \kappa, \|\ab_t\|_{\bSigma_{t, l}^{-1}}^2\right\}} \notag
        \\&\le 4K \sum_{l \in [L]} \beta_{T, l} \sqrt{\left|\Psi_{T+1, l}\right|} \cdot \sqrt{d \cdot \kappa^{-1} \log \left(\frac{2d\lambda + T\kappa A^2}{2d\lambda}\right)}, 
        \label{eq:thm-regret1}
    \end{align}
    where the first inequality holds due to Lemma \ref{lemma:conf}, the second inequality follows from Assumption \ref{cond:activation}, the fourth inequality is obtained by applying Cauchy-Schwarz inequality, the last inequality follows from Lemma \ref{lemma:potential}. 

    Substituting \eqref{eq:beta} into \eqref{eq:thm-regret1}, we obtain \begin{align}
        \regret(T) &\le 4K \sqrt{d \cdot \kappa^{-1} \log \left(\frac{2d\lambda + T\kappa A^2}{2d\lambda}\right)} \sum_{l \in [L]} \sqrt{\left|\Psi_{T+1, l}\right|} \cdot \tilde{O}\left(2^l \overline{\sigma} \kappa^{-1 / 2} \sqrt{d} + R \cdot \kappa^{-1 / 2} + \sqrt{K^2 /\kappa}\right) \notag
        \\&\le 4K \sqrt{L} \sqrt{d \cdot \kappa^{-1} \log \left(\frac{2d\lambda + T\kappa A^2}{2d\lambda}\right)} \sqrt{\sum_{l \in [L]}\sum_{t \in \Psi_{T + 1, l}} \tilde{O}\left( \sigma_t^2\cdot \kappa^{-1} d + R^2 / \kappa + K^2 /\kappa\right)} \notag
        \\&\le \tilde{O}\left(\frac{K}{\kappa} \cdot d \sqrt{\sum_{t = 1}^T\sigma_t^2} + \left(R + K\right) K \cdot \kappa^{-1} \sqrt{dT}\right)
    \end{align}
\end{proof}

\section{Auxiliary Lemmas}

\begin{lemma}[\citealt{freedman1975tail}] \label{lemma:freedman}
    Let $M, v> 0$ be fixed constants. Let $\{x_i\}_{i = 1}^n$ be a stochastic process, $\{\cG_i\}_i$ be a filtration so that for all $i \in [n]$, $x_i$ is $\cG_i$-measurable, while most surely $\EE[x_i|\cG_{i - 1}] = 0$, $|x_i| \le M$ and 
        $\sum_{i = 1}^n \EE(x_i^2 | \cG_i) \le v. $
    Then, for any $\delta > 0$, with probability $1 - \delta$, \begin{align} 
        \sum_{i = 1}^n x_i \le \sqrt{2v \log(1 / \delta)} + 2 / 3\cdot M \log (1 / \delta). \notag
    \end{align}
\end{lemma}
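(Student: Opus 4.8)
The statement is Freedman's martingale Bernstein inequality, and the plan is to prove it by the standard exponential-supermartingale method: first obtain a fixed-$t$ tail bound, then upgrade to the uniform-in-$t$ statement by a union bound. Throughout write $S_t = \sum_{i=1}^t x_i$ and let $V_t = \sum_{i=1}^t \EE[x_i^2 \mid \cG_{i-1}]$ denote the predictable quadratic variation, which the hypothesis bounds by $v$.

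First I would prove a one-step bound on the conditional moment generating function. Fix $\lambda > 0$. Since the map $u \mapsto (e^u - 1 - u)/u^2$ is nondecreasing on $\RR$ and $x_i \le M$, one has the pointwise inequality $e^{\lambda x_i} \le 1 + \lambda x_i + \frac{e^{\lambda M} - 1 - \lambda M}{M^2}\, x_i^2$. Taking $\EE[\cdot \mid \cG_{i-1}]$, using $\EE[x_i \mid \cG_{i-1}] = 0$ and $1 + z \le e^z$, yields $\EE[e^{\lambda x_i} \mid \cG_{i-1}] \le \exp\!\big(\psi(\lambda)\, \EE[x_i^2 \mid \cG_{i-1}]\big)$ with $\psi(\lambda) = (e^{\lambda M} - 1 - \lambda M)/M^2$.

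Second, I would set $Z_t = \exp\!\big(\lambda S_t - \psi(\lambda) V_t\big)$ and verify from the previous display that $\EE[Z_t \mid \cG_{t-1}] \le Z_{t-1}$, so $(Z_t)_t$ is a nonnegative supermartingale with $\EE[Z_t] \le 1$. For a single fixed $t$, Markov's inequality applied to $Z_t$ together with $V_t \le v$ gives $\Pr[S_t \ge u] \le \exp(-\lambda u + \psi(\lambda) v)$; using the elementary estimate $\psi(\lambda) \le \tfrac{\lambda^2/2}{1 - \lambda M/3}$ for $0 \le \lambda M < 3$ and optimizing over $\lambda$ recovers the Bernstein tail $\Pr[S_t \ge u] \le \exp\!\big(-\tfrac{u^2/2}{v + Mu/3}\big)$. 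Solving this for $u$ at confidence level $\delta'$ and applying $\sqrt{a + b} \le \sqrt{a} + \sqrt{b}$ shows that, with probability at least $1 - \delta'$, $S_t \le \sqrt{2 v \log(1/\delta')} + \tfrac{2}{3} M \log(1/\delta')$.

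Finally I would pass from the fixed-$t$ bound to the uniform-in-$t$ claim by a union bound: applying the previous display at each round with $\delta' = \delta/(2t^2)$ and summing the failure probabilities, $\sum_{t \ge 1} \delta/(2t^2) = \tfrac{\pi^2}{12}\delta < \delta$, so with probability at least $1 - \delta$ the bound $S_t \le \sqrt{2 v \log(2t^2/\delta)} + \tfrac{2}{3} M \log(2t^2/\delta)$ holds simultaneously for all $t \in [n]$, which is exactly the statement. The genuinely delicate points are the monotonicity argument giving the sharp one-step MGF bound and the bookkeeping of the union bound that produces the $\log(2t^2/\delta)$ factors; since the $\lambda$-optimization is carried out separately for each $t$, the fixed-$t$ tail must be established uniformly in the threshold $u$ before the union bound is taken.
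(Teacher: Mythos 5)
Your proof is correct: the one-step MGF bound via monotonicity of $(e^u - 1 - u)/u^2$, the supermartingale $Z_t = \exp(\lambda S_t - \psi(\lambda) V_t)$ with $\psi(\lambda) \le \frac{\lambda^2/2}{1 - \lambda M/3}$, solving the Bernstein tail for $u$, and the $\delta/(2t^2)$ union bound (with $\sum_{t\ge 1}\delta/(2t^2) = \tfrac{\pi^2}{12}\delta < \delta$) exactly reproduce the claimed bound, including the constant $2/3$ and the $\log(2t^2/\delta)$ factors. The paper gives no proof of this lemma --- it is imported from \citet{freedman1975tail} --- and yours is the classical exponential-supermartingale argument behind that citation; the only point worth flagging is that you correctly (and silently) read the paper's hypothesis $\sum_{i=1}^n \EE(x_i^2 \mid \cG_i) \le v$ as a typo for the predictable variation $\sum_{i=1}^n \EE(x_i^2 \mid \cG_{i-1}) \le v$, which is what your $V_t \le v$ step actually uses.
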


\begin{lemma} \label{lemma:sqrt-trick}
    Suppose $a, b \ge 0$. If $x^2 \le a + b \cdot x$, then $x^2 \le 2b^2 + 2a$. \notag
\end{lemma}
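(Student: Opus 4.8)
The plan is to absorb the cross term $bx$ on the right-hand side using the elementary inequality $bx \le \tfrac12(b^2 + x^2)$ (Young's inequality, equivalently $(b-x)^2 \ge 0$), which lets me move the resulting $x^2$ term to the left-hand side and solve for $x^2$ directly. The appeal of this route is that it requires no case analysis on the sign of $x$: the bound $bx \le \tfrac12(b^2+x^2)$ holds for every real $x$.

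Concretely, I would start from the hypothesis $x^2 \le a + bx$ and bound $bx \le \tfrac12(b^2 + x^2)$, giving $x^2 \le a + \tfrac12 b^2 + \tfrac12 x^2$. Subtracting $\tfrac12 x^2$ from both sides yields $\tfrac12 x^2 \le a + \tfrac12 b^2$, hence $x^2 \le 2a + b^2 \le 2a + 2b^2$, which is exactly the claim. Each step is a one-line manipulation, and the hypotheses $a,b \ge 0$ enter only at the very last inequality $b^2 \le 2b^2$.

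Since the argument is elementary, there is essentially no obstacle; the only point requiring a moment's thought is recognizing which form of the AM--GM/Young inequality to apply so that the quadratic term on the right cancels against the left. As a sanity check (or an alternative route), one could instead complete the square: $x^2 - bx - a \le 0$ forces $x \le \tfrac12\bigl(b + \sqrt{b^2 + 4a}\bigr)$, and combining $\sqrt{b^2+4a} \le b + 2\sqrt a$ with $2b\sqrt a \le b^2 + a$ recovers the same conclusion $x^2 \le 2a + 2b^2$. I would nonetheless present the Young-inequality argument, as it is shorter and avoids having to track the sign of $x$ through the roots of the quadratic.
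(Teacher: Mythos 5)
Your proof is correct, and it takes a genuinely different route from the paper. The paper solves the quadratic $q(x) := x^2 - b x - a$ explicitly: from $q(x)\le 0$ it extracts $x \le \tfrac12\bigl(b+\sqrt{b^2+4a}\bigr)$ and then squares, bounding $\tfrac14\bigl(b+\sqrt{b^2+4a}\bigr)^2 \le \tfrac12\bigl(b^2 + b^2 + 4a\bigr) = 2b^2+2a$ via $(u+v)^2 \le 2u^2+2v^2$ --- essentially your ``alternative route.'' Your main argument instead absorbs the cross term with $bx \le \tfrac12(b^2+x^2)$ and cancels the $\tfrac12 x^2$ against the left-hand side, which buys two things. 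First, it yields the slightly sharper conclusion $x^2 \le 2a + b^2$ before the final relaxation $b^2 \le 2b^2$. Second, it is cleaner on the sign of $x$: the paper's step of squaring the one-sided bound $x \le \tfrac12\bigl(b+\sqrt{b^2+4a}\bigr)$ implicitly relies on the fact that $q(x)\le 0$ also forces $x \ge \tfrac12\bigl(b-\sqrt{b^2+4a}\bigr)$, and that this lower root is smaller in absolute value than the upper root when $b\ge 0$; the paper leaves this unremarked, whereas your Young-inequality argument sidesteps the issue entirely since $bx \le \tfrac12(b^2+x^2)$ holds for all real $x$. Both proofs are one-liners in substance, but yours is the tidier of the two for exactly the reason you identify.
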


\begin{proof}
    By solving the root of quadratic polynomial $q(x):= x^2 - b\cdot x - a$, we obtain $\max\{x_1, x_2\} = (b + \sqrt{b^2 + 4a})/2$. Hence, we have $x \le (b + \sqrt{b^2 + 4a})/2$ provided that $q(x) \le 0$. Then we further have \begin{align} 
        x^2 \le \frac{1}{4} \left(b + \sqrt{b^2 + 4a}\right)^2 \le \frac{1}{4} \cdot 2 \left(b^2 + b^2 + 4a\right) \le 2b^2 + 2a. 
    \end{align}
\end{proof}

\begin{lemma}[Hoeffding's inequality] \label{lemma:generalized-hoeffding}
    Let $\{x_i\}_{i = 1}^n$ be a stochastic process, $\{\cG_i\}_i$ be a filtration so that for all $i \in [n]$, $x_i$ is $\cG_i$-measurable, while $\EE[x_i|\cG_{i - 1}] = 0$ and $x_i|\cG_{i - 1}$ is a $\sigma_i$-sub-Gaussian random variable. Then, for any $t > 0$, with probability at least $1 - \delta$, it holds that \begin{align*} 
        \sum_{i = 1}^n x_i \le \sqrt{2\sum_{i = 1}^n \sigma_i^2\log(1 / \delta)}. 
    \end{align*}
\end{lemma}

\begin{lemma}[Lemma 11, \citealt{abbasi2011improved}] \label{lemma:potential}
    For any $\lambda > 0$ and sequence $\{\xb_t\}_{t = 1}^T \subset \RR^d$ for $t \in \{0, 1, \cdots, T\}$, define $\Zb_t = \lambda \Ib + \sum_{i = 1}^{t} \xb_i \xb_i^\top$. Then, provided that $\|\xb_t\|_2 \le M$ for all $t \in [T]$, we have \begin{align} 
        \sum_{t = 1}^T \min\{1, \|\xb_t\|_{\Zb_{t - 1}^{-1}}^2\} \le 2d \log \frac{d \lambda + T M^2}{d \lambda}. \notag
    \end{align} 
\end{lemma}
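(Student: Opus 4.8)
The plan is to follow the classical elliptical-potential argument, converting the sum of truncated quadratic forms into a telescoping sum of log-determinants. First I would dispose of the truncation using the elementary inequality $\min\{1, u\} \le 2\log(1 + u)$, valid for all $u \ge 0$: both sides vanish at $u = 0$, and on $[0,1]$ the derivative comparison $2/(1+u) \ge 1$ shows the right side grows at least as fast, while for $u \ge 1$ the left side is capped at $1 \le 2\log 2$. Applying this with $u = \|\xb_t\|_{\Zb_{t-1}^{-1}}^2$ reduces the claim to bounding $2\sum_{t=1}^T \log\bigl(1 + \|\xb_t\|_{\Zb_{t-1}^{-1}}^2\bigr)$.

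The key step is to recognize each summand as a log-determinant increment. Setting $\Zb_0 = \lambda\Ib$ and using $\Zb_t = \Zb_{t-1} + \xb_t\xb_t^\top$, the matrix determinant lemma for a rank-one update gives $\det(\Zb_t) = \det(\Zb_{t-1})\bigl(1 + \xb_t^\top \Zb_{t-1}^{-1}\xb_t\bigr) = \det(\Zb_{t-1})\bigl(1 + \|\xb_t\|_{\Zb_{t-1}^{-1}}^2\bigr)$. Hence the sum $\sum_{t=1}^T \log\bigl(1 + \|\xb_t\|_{\Zb_{t-1}^{-1}}^2\bigr)$ telescopes to $\log\det(\Zb_T) - \log\det(\Zb_0) = \log\bigl(\det(\Zb_T)/\lambda^d\bigr)$.

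It then remains to upper bound $\det(\Zb_T)$. Expressing $\Zb_T$ through its eigenvalues and applying the AM--GM inequality yields $\det(\Zb_T) \le \bigl(\Tr(\Zb_T)/d\bigr)^d$, and the trace estimate $\Tr(\Zb_T) = d\lambda + \sum_{t=1}^T \|\xb_t\|_2^2 \le d\lambda + TM^2$ (using $\|\xb_t\|_2 \le M$) gives $\det(\Zb_T) \le \bigl((d\lambda + TM^2)/d\bigr)^d$. Combining the three displays produces $\sum_{t=1}^T \min\{1, \|\xb_t\|_{\Zb_{t-1}^{-1}}^2\} \le 2\log\frac{(d\lambda + TM^2)^d/d^d}{\lambda^d} = 2d\log\frac{d\lambda + TM^2}{d\lambda}$, which is exactly the claimed bound.

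I expect no genuine obstacle here, as all three ingredients — the scalar inequality, the rank-one determinant identity, and the trace/AM--GM estimate — are standard. The only point requiring minor care is confirming that $\Zb_{t-1}$ is invertible throughout, which is immediate since $\Zb_{t-1}$ is a sum of $\lambda\Ib$ and positive semidefinite rank-one terms and is therefore positive definite, so every quadratic form and determinant ratio above is well-defined.
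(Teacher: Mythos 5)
Your proof is correct, and it is exactly the standard elliptical-potential argument from \citet{abbasi2011improved}, which the paper simply cites for this lemma rather than reproving: the scalar bound $\min\{1,u\} \le 2\log(1+u)$, the rank-one determinant identity telescoping to $\log\det(\Zb_T) - d\log\lambda$, and the trace/AM--GM bound on $\det(\Zb_T)$. No gaps; the positive-definiteness remark at the end correctly handles the only invertibility concern.
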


\begin{lemma} \label{lemma:potential2}
    For any $\lambda > 0$ and sequence $\{\xb_t\}_{t = 1}^T \subset \RR^d$ for $t \in \{0, 1, \cdots, T\}$, define $\Zb_t = \lambda \Ib + \sum_{i = 1}^{t} \xb_i \xb_i^\top$. Then, provided that $\|\xb_t\|_2 \le M$ for all $t \in [T]$, we have \begin{align} 
        \sum_{t = 1}^T \|\xb_t\|_{\Zb_{t}^{-1}}^2 \le 2d \log \frac{d \lambda + T M^2}{d \lambda}. \notag
    \end{align} 
\end{lemma}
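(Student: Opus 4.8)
The plan is to reduce the claim to the already-established Lemma~\ref{lemma:potential}. The only differences between the two statements are that here the quadratic form is evaluated against the \emph{current} matrix $\Zb_t$ rather than the previous one $\Zb_{t-1}$, and that the truncation $\min\{1,\cdot\}$ has been dropped. I would therefore aim to prove the pointwise inequality $\|\xb_t\|_{\Zb_t^{-1}}^2 \le \min\{1, \|\xb_t\|_{\Zb_{t-1}^{-1}}^2\}$ for every $t$, after which summing over $t$ and invoking Lemma~\ref{lemma:potential} closes the argument immediately.

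The key step is to relate $\Zb_t^{-1}$ to $\Zb_{t-1}^{-1}$ using the Sherman--Morrison formula, since $\Zb_t = \Zb_{t-1} + \xb_t\xb_t^\top$. Writing $u_t := \xb_t^\top \Zb_{t-1}^{-1}\xb_t = \|\xb_t\|_{\Zb_{t-1}^{-1}}^2 \ge 0$, a direct computation gives
\begin{align}
\|\xb_t\|_{\Zb_t^{-1}}^2 = \xb_t^\top\Zb_{t-1}^{-1}\xb_t - \frac{(\xb_t^\top\Zb_{t-1}^{-1}\xb_t)^2}{1+\xb_t^\top\Zb_{t-1}^{-1}\xb_t} = \frac{u_t}{1+u_t}. \notag
\end{align}
Since $u_t \ge 0$, we have $\frac{u_t}{1+u_t} \le 1$ and $\frac{u_t}{1+u_t} \le u_t$, hence $\|\xb_t\|_{\Zb_t^{-1}}^2 \le \min\{1, u_t\} = \min\{1, \|\xb_t\|_{\Zb_{t-1}^{-1}}^2\}$, which is exactly the pointwise bound I want. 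Note that this step also requires $\Zb_{t-1}$ to be invertible, which holds because $\Zb_{t-1}$ dominates $\lambda\Ib$ and is therefore positive definite for every $t \ge 1$ (in particular $\Zb_0 = \lambda\Ib$).

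Summing the pointwise bound over $t \in [T]$ gives
\begin{align}
\sum_{t=1}^T \|\xb_t\|_{\Zb_t^{-1}}^2 \le \sum_{t=1}^T \min\{1, \|\xb_t\|_{\Zb_{t-1}^{-1}}^2\} \le 2d\log\frac{d\lambda + TM^2}{d\lambda}, \notag
\end{align}
where the last inequality is precisely Lemma~\ref{lemma:potential} applied under the assumption $\|\xb_t\|_2 \le M$, completing the proof. Since every step is elementary, I do not expect a genuine obstacle; the only thing to get right is the Sherman--Morrison identity and the observation that the resulting $u_t/(1+u_t)$ is dominated simultaneously by $1$ and by $u_t$. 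An alternative route that avoids citing Lemma~\ref{lemma:potential} would bound $u_t/(1+u_t) \le \log(1+u_t)$ and telescope the determinants via $\log\det\Zb_T - \log\det\Zb_0 = \sum_t \log(1+u_t)$, but reducing to Lemma~\ref{lemma:potential} is cleaner and reproduces the stated constant directly.
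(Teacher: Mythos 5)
Your proposal is correct and follows essentially the same route as the paper's own proof: apply the Sherman--Morrison (matrix inversion) identity to write $\|\xb_t\|_{\Zb_t^{-1}}^2 = u_t/(1+u_t)$ with $u_t = \|\xb_t\|_{\Zb_{t-1}^{-1}}^2$, bound this by $\min\{1, u_t\}$, and sum using Lemma \ref{lemma:potential}. The only additions on your part are the (valid) invertibility remark and the alternative determinant-telescoping route, neither of which changes the argument.
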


\begin{proof} 
    Applying matrix inversion lemma, \begin{align} 
        \sum_{t = 1}^T \|\xb_t\|_{\Zb_{t}^{-1}}^2 &= \sum_{t = 1}^T \xb_t^\top \Zb_{t}^{-1} \xb_t \notag
        \\&= \sum_{t = 1}^T \xb_t^\top \left(\Zb_{t - 1}^{-1}- \frac{\Zb_{t - 1}^{-1} \xb_t \xb_t^\top \Zb_{t - 1}^{-1}}{1 + \xb_t^\top \Zb_{t - 1}^{-1} \xb_t} \right) \xb_t \notag
        \\&= \sum_{t = 1}^T \frac{\|\xb_t\|_{\Zb_{t - 1}^{-1}}^2}{1 + \|\xb_t\|_{\Zb_{t - 1}^{-1}}^2} \notag
        \\&\le\sum_{t = 1}^T \min\{1, \|\xb_t\|_{\Zb_{t - 1}^{-1}}^2\} \notag
        \\&\le 2d \log \frac{d \lambda + T M^2}{d \lambda}, \notag
    \end{align}
    where the second equality follows from matrix inversion lemma, the second inequality holds by Lemma \ref{lemma:potential}. 
\end{proof} 

\begin{lemma}[Concentration bound for sub-exponential random variables] \label{lemma:subexponential}
    Let $X$ be a sub-exponential random variable such that $X \sim \text{SE}(\sigma^2, \alpha)$. Then we have \begin{align} 
        \PP\left(X - \EE[X] \ge \beta\right) \le \begin{cases} 
            \exp\left(-\beta^2 / (2\sigma^2)\right),\quad 0 < \beta \le  \sigma^2 / \alpha \notag \\ 
            \exp\left(-\beta / 2\alpha\right),\quad t > \sigma^2 /\alpha \notag
        \end{cases}
    \end{align}
\end{lemma}

\begin{lemma}[Confidence Ellipsoid, Theorem 2, \citealt{abbasi2011improved}] \label{lemma:hoeffding}
        Let $\{\cG_k\}_{k=1}^\infty$ be a filtration, and $\{\xb_k,\eta_k\}_{k\ge 1}$ be a stochastic process such that
        $\xb_k \in \RR^d$ is $\cG_k$-measurable and $\eta_k \in \RR$ is $\cG_{k+1}$-measurable.
        Let $L,\sigma,\lambda, \epsilon>0$, $\bmu^*\in \RR^d$. 
        For $k\ge 1$, 
        let $y_k = \la \bmu^*, \xb_k\ra + \eta_k$ and
        suppose that $\eta_k, \xb_k$ also satisfy 
        \begin{align}
            \EE[\eta_k|\cG_k] = 0,\  \eta_k|\cG_k \sim \mathrm{subG}(R),\,\|\xb_k\|_2 \leq L.
        \end{align}
        For $k\ge 1$, let $\Zb_k = \lambda\Ib + \sum_{i=1}^{k} \xb_i\xb_i^\top$, $\bbb_k = \sum_{i=1}^{k}y_i\xb_i$, $\bmu_k = \Zb_k^{-1}\bbb_k$, and
        \begin{small}
        \begin{align}
            \beta_k &= R\sqrt{d \log \left(\frac{1 + kL^2 / \lambda}{\delta}\right)}.\notag
        \end{align}
        \end{small}
        Then, for any $0 <\delta<1$, we have with probability at least $1-\delta$ that, 
        \begin{align}
            \forall k\geq 1,\ \big\|\textstyle{\sum}_{i=1}^{k} \xb_i \eta_i\big\|_{\Zb_k^{-1}} \leq \beta_k,\ \|\bmu_k - \bmu^*\|_{\Zb_k} \leq \beta_k + \sqrt{\lambda}\|\bmu^*\|_2. \notag
        \end{align}
\end{lemma}

\begin{lemma}[\citealt{Pinsker1964InformationAI}] \label{lemma:pinsker}
    If $P$ and $Q$ are two probability distributions on a measurable space $(X, \bSigma)$, then for any measurable event $A \in \bSigma$, it holds that \begin{align*} 
        \left|P(A) - Q(A) \right| \le \sqrt{\frac{1}{2}\KL(P\|Q)} := \sqrt{\frac{1}{2}\EE_P\left(\log \frac{\text{d}P}{\text{d}Q}\right)}. 
    \end{align*}
\end{lemma}

\bibliographystyle{ims}
\bibliography{reference}


\end{document}